\theoremstyle{remark}
\newtheorem{remark}{Remark}
\title{On Regret-Optimal Learning in Decentralized Multi-player Multi-armed Bandits}
\author{Naumaan Nayyar, Dileep Kalathil and Rahul Jain
\thanks{The first and third authors are with the Department of Electrical Engineering, University of Southern California, Los Angeles, CA 90089. The second author is with the Department of EECS, University of California, Berkeley. Emails: {\tt (nnayyar,rahul.jain)@usc.edu, dileep.kalathil@berkeley.edu}.}}
\begin{document}
 
\maketitle

\begin{abstract}
We consider the problem of learning in single-player and multiplayer multi-armed bandit models. Bandit problems are classes of online learning problems that capture exploration versus exploitation tradeoffs. In a multi-armed bandit model, players can pick among many arms, and each play of an arm generates an i.i.d. reward from an unknown distribution. The objective is to design a policy that maximizes the expected reward over a time horizon for a single player setting and the sum of expected rewards for the multiplayer setting. In the multiplayer setting, arms may give different rewards to different players. There is no separate channel for coordination among the players. Any attempt at communication is costly and adds to regret. We propose two decentralizable policies, $\tt E^3$ ($\tt E$-$\tt cubed$) and $\tt E^3$-$\tt TS$, that can be used in both single player and multiplayer settings. These policies are shown to yield expected regret that grows at most as $O(\log^{1+\delta} T)$ (and $O(\log T)$ under some assumption). It is well known that $O(\log T)$ is the lower bound on the rate of growth of regret even in a centralized case. The proposed algorithms improve on prior work where regret grew at $O(\log^2 T)$. More fundamentally, these policies address the question of additional cost incurred in decentralized online learning, suggesting that there is at most an $\delta$-factor cost in terms of order of regret. This solves a problem of relevance in many domains and had been open for a while.
\end{abstract}

\begin{IEEEkeywords}
Learning algorithms; Decision making; Decentralized systems; Optimization problems; Cognitive systems.
\end{IEEEkeywords}

\section{Introduction}
\label{sec:intro}

Multi-armed bandit (MAB) models represent an exploration versus exploitation trade-off where the player must choose between \textit{exploring} the environment to find better options, and \textit{exploiting} based on her current knowledge to maximize her utility. These models are widely applicable in many application like  display advertisements, sensor networks, route planning and  spectrum sharing.  The model can be understood through a simple game of choosing between two coins with unknown biases. The coins are tossed repeatedly and one of them is chosen at each instant. If at a given instance, the chosen coin turns up heads, we get a reward of \$1, otherwise we get no reward. It is known that one of the two coins has a better bias, but the identity of the coin is not known. The question is, what is the optimal `learning' policy that helps maximize the expected reward, i.e., to discover which coin has a better bias and at the same time maximize the cumulative reward as the game is played. Note that the player doesn't know the value of the biases as well as she has no prior probability distribution on these values. This motivates the  \textit{ non-Bayesian} setting. The formulation where the player has a prior distribution on the parameters is called Bayesian multi-armed bandits.

The idea of multi-armed bandit models dates back to Thompson \cite{Th33} and the first rigorous formulation is due to Robbins \cite{robbins1985some}. The single player multi-armed bandit problem in a non-Bayesian setting was first formulated by Lai and Robbins \cite{LaRo85}.  Any bandit policy that makes the best choice more than a constant fraction of the time is said to have sublinear \textit{regret}. Regret measures the performance of any strategy formally against the best policy that could be employed if the distribution parameters were known. It was shown in \cite{LaRo85} that there is no learning policy that asymptotically has expected regret growing slower than $O(\log T)$. A learning scheme was also constructed that asymptotically achieved this lower bound.  

This model was subsequently studied and generalized by many researchers. In~\cite{AnVaWa87a}, Anantharam et al. generalized it to the case of multiple plays, i.e., the player can pick multiple arms (or coins) when there are more than 2 arms. In \cite{Ag95}, Agrawal proposed a sample mean based index policy that asymptotically achieved $O(\log T)$ regret. For the special case of bounded support for rewards, Auer et al. \cite{AuCeFi02} introduced a simple index-based policy, ${\tt UCB}_1$, that achieved logarithmic expected regret over finite time horizons. $\tt UCB_1$ has since become the benchmark to compare new algorithms against because of its power and simplicity. 

Recently, policies based on Thompson Sampling (\texttt{TS}) \cite{Th33} have experienced a surge of interest due to their much better empirical performance \cite{ChLi12}. It is a probability matching policy which, unlike the $\tt UCB$-class of policies that use a deterministic confidence bound, draws samples from a distribution to determine which arm to play based on the probability of its being optimal. The logarithmic regret performance of the policy was not proved until very recently~\cite{AgGo12}.   \cite{kaufmann2012bayesian} introduced the Bayes-UCB algorithm which also uses use a Bayesian approach for analyzing the regret bound for stochastic bandit problems.

Deterministic sequencing algorithms which have separate exploration and exploitation phases have also appeared in the literature as an alternative to the joint exploration and exploitation approaches of UCB-like and probability matching algorithms. Noteworthy among these are the Phased Exploration and Greedy Exploitation  policy for linear bandits~\cite{RuTs10} that achieves $O(\sqrt T)$ regret in general and $O(\log(T))$ regret for finitely many linearly parametrized arms. Other noteworthy algorithms include the logarithmic regret achieving deterministic sequencing of exploration and exploitation policy, \cite{VaLiZh13} with i.i.d. setting and  \cite{liu2013learning} with Markovian setting. Single-player bandit problems have also been looked at in the PAC framework, for instance, in~\cite{EvMaMa02}. However, we restrict our attention to performance in the expected sense in this work.  

In addition to single player bandits, there has been growing interest in multiplayer learning in multi-armed bandits, motivated by distributed sensor networks, wireless spectrum sharing and in particular cognitive radio networks. Suppose there are two wireless users trying to choose between two wireless channels. Each wireless channel is random, and looks different to each user. If channel statistics were known, we would try to determine a matching wherein the expected sum-rate of the two users is maximized. But the channel statistics are unknown, and they must be learnt by sampling the channels. Moreover, the two users have to do this independently and cannot share their observations as there is no dedicated communication channel between them. They, however, may communicate implicitly for coordination but this would come at the expense of reduced opportunities for rewards or benefits, and thus would add to regret. One can easily imagine a more general network setting with $M$ users and  $N$ channels. This immediately gives rise to two questions. First, what is the lower bound for decentralized learning? That is, is there an inherent cost of decentralization in such network? And second, can we design a simple learning algorithm with provably optimal performance guarantees, in the context of such a decentralized network problem? 

Policies for decentralized learning with sublinear regret have appeared in the literature for various models. When arms were restricted to have the same rewards for different users, Anandkumar et al. \cite{AnMi11} showed that logarithmic regret was achievable as the problem reduces to a ranking problem that can be solved in constant time in a decentralized manner. Similar works have also appeared for i.i.d.  \cite{VaLiZh13} \cite{LiZh10a} and Markovian \cite{liu2013learning} \cite{LaJiPo08} arm reward settings. Relaxing this assumption makes the problem more complicated as it now becomes a bipartite matching problem and no decentralized algorithm performs quick enough. In our previous work~\cite{KaNaJa14}, we proposed a policy, $\tt dUCB_4$ that achieved $O(\log^2 T)$ regret through a recurrent negotiation mechanism between players. However, the answers to the two questions above remained unknown. In a similar work \cite{avner2015learning}, authors address the problem of 
decentralized multi-armed bandits. While they address the same problem as ours, the emphasis is on  the stability  of this decentralized setting with minimum possible communication. Also, they don't provide any optimality guarantees as compared to the optimal centralized learning problem.  However, our paper assumes that players in the system remains the same. In \cite{avner2015learning} users can arrive and leave at random times.  Landgren et. al.  \cite{landgren2015distributed} uses a multi-armed bandit model for cooperative decision making problem in the context of running a consensus algorithm. Their setting is very different from the problem considered in this paper.

In this paper, we do not present an information theoretic lower bound on decentralized learning in a multiplayer multi-armed bandit problems. Such a result would be very interesting as it will also yield insight into the exact role of information sharing between players for a decentralized policy to work without an increase in expected regret. However, we managed to partially answer both questions above through two new decentralizable policies, $\tt E^3$ and $\tt E^3$-$\tt TS$, where $\tt E^3$ stands for Exponentially-spaced Exploration and Exploitation policy, which we also call as $\tt E$-$\tt cubed$.

Both policies yield expected regret of the order $O(\log^{1+\delta}T)$  ($O(\log T)$ under some assumptions) in both single and multiplayer settings.  The policies are based on exploration and exploitation in pre-determined phases such that over a long time horizon $T$, there are only logarithmically many slots in the exploration phases. It is well known that the optimal order of regret that can be achieved is $O(\log T)$ \cite{LaRo85}. These policies suggest an answer to the fundamental question of inherent cost to decentralize, that there is no cost to the order optimality, at least up to an $\log^{\delta}T$ factor. An asymptotic lower bound for the decentralized MAB problem (similar to that of the centralized MAB in \cite{LaRo85}) is an important future research question. 

The policies introduced in this paper, and the corresponding results hold even when the rewards are Markovian. However, we only present the i.i.d. case here and refer readers to our earlier paper~\cite{KaNaJa14} for ideas on extensions to the Markovian setting. Extensive simulations were conducted to evaluate the empirical performances of these policies and compared to prior work in the literature, including the classical $\tt UCB_1$ and \texttt{TS} policies. The decentralized policies $\tt dE^3$ and $\tt dE^3$-$\tt TS$ are compared with the previously known $\tt dUCB_4$ policy.

The rest of the paper is organized as follows. Section~\ref{sec:model} describes the model and problem formulations for single and multiplayer bandits. Section~\ref{sec:prior} describes relevant prior work in the area. The new policies $\tt E^3$ and $\tt E^3$-$\tt TS$, and their multiplayer counterparts, $\tt dE^3$ and $\tt dE^3$-$\tt TS$ are described and studied in Section~\ref{sec:new-policy}. Section~\ref{sec:simulation} presents empirical performances of new and previous policies.

\section{Model and problem formulation}
\label{sec:model}

In this section, we describe problem formulations for single and multiplayer bandits. The single player formulation has been well-studied in literature, for instance, by Auer et al. \cite{AuCeFi02} and others. The multiplayer formulation is much newer, and has appeared in our previous work \cite{KaNaJa14}.  

\subsection{Single player model}
\label{subsec:sp-model}

We consider an $N$-armed bandit problem. At each instant $t$, an arm $k$ is chosen, and a reward $X_k(t)$ is generated, from an independent and identically distributed (i.i.d.) random process with a fixed but unknown distribution. The processes are assumed to have bounded support, without loss of generality, in $[0,1]$. Arm reward distributions have means $\mu_k$ that are unknown. When choosing an arm, the player has access to the  history of rewards and actions, $\sH(t)$, with $\sH(0) := \emptyset$. Denote the arm chosen at time $t$ by $a(t) \in \sA := \{1,...,N\}$. A \textit{policy} $\alpha$ is a sequence of maps $\alpha(t): \sH(t) \rightarrow \sA$ that specifies the arm chosen at time $t$. The player's objective is to choose a policy that maximizes the expected reward over a finite time horizon $T$.

If the mean rewards of the arms were known, the problem is trivially solved by always playing the arm with the highest mean reward, i.e., $\tilde{\alpha}(t) = \arg\max_{1\leq i \leq N} \mu_i,~\forall t$. When the mean rewards are not known, the notion of \textit{regret} is used to compare policies. Regret is the difference between the  cumulative rewards obtained  by a policy $\alpha$ and when playing the most rewarding arm all the time. Formally, the player's objective is to minimize the expected regret over all causal policies $\alpha$ as defined above, which is given by,
\begin{equation}
\label{eq:sp-regret}
\sR_{\alpha}(T)=T \mu_{1}-\bbE_{\alpha}\left[\sum_{t=1}^{T} X_{\alpha(t)}(t)\right],
\end{equation}
where arm $1$ is taken to have the greatest mean w.l.o.g.

In practical implementations of bandit algorithms in low-power settings such as sensor networks where the implementation of any learning/control policy should consume minimum amount of energy, it will be useful to include a computation cost as well. This is particularly the case when the algorithms must solve combinatorial optimization problems that are NP-hard. Such costs  arise in decentralized settings in particular,  where algorithms pay a communication cost for coordination between the decentralized players. For example, as we shall see later in our decentralized learning algorithm, the players may have to spend many time slots for coming up with a bipartite matching.   We model it as a constant $C$ units of cost each time an index is computed by the policy. With this refinement, the regret of a policy $\alpha$ that computes its indices $m(T)$ times over a time horizon $T$ is,
\begin{equation}
\label{eq:regret-computation}
\tilde{\sR}_{\alpha}(T) := \mu_{1}T - \sum_{j=1}^{N} \mu_{j} \bbE_{\alpha}[n_{j}(T)] + C\bbE_{\alpha}[m(T)],
\end{equation}
where $n_j(T)$ is the number of times arm $j$ is played.

\subsection{Multiplayer model}
\label{subsec:mp-model}

We now describe the generalization of the single player, where we consider an $N$-armed bandit with $M$ players. We will refer to arms as channels interchangeably. There is no dedicated communication channel for coordination among the players. However, we do allow players to communicate with one another by playing arms in a certain way, e.g., arm 1 signals a bit `0', arm 2 can signal a bit `1'. This of course will add to regret, and hence such communication comes at a cost. We assume that $N \geq M$. 

At any instant $t$, each player choose one arm from the set of $N$ arms or takes no action (i.e., selects no arm). If more than one player picks the same arm, we regard it as a \textit{collision} and this interference results in zero reward for those players. The rest of the model is similar to the single player case. Arm $k$ chosen by player $i$ generates an i.i.d. reward $S_{i,k}(t)$ from an unknown distribution, which has bounded support, w.l.o.g., in $[0,1]$. Let $\m_{i,k}$ denote the unknown mean of $S_{i,k}(t)$.

Let $X_{i,k}(t)$ be the reward that player $i$ gets from playing arm $k$ at time $t$. Thus, if there is no collision, $X_{i,k}(t) = S_{i,k}(t)$. Denote the action of player $i$ at time $t$ by $a_i(t) \in \sA := \{1, \ldots, N \}$. Let $Y_{i}(t)$ be the communication message from player $i$ at time $t$ and $Y_{-i}(t)$ be the messages from all the other players except player $i$ at time $t$. Then, the \textit{history} seen by player $i$ at time $t$ is $\sH_i(t) = \{ (a_i(1),X_{i,a_i(1)}(1), Y_{-i}(1)),\cdots, (a_i(t-1),X_{i,a_i(t-1)}(t-1), Y_{-i}(t-1))\}$ with $\sH_i(0)=\emptyset$. A \textit{policy} $\alpha_i = (\alpha_i(t))_{t=1}^{\infty}$ for player $i$ is a sequence of maps $\alpha_i(t):\sH_i(t) \to \sA$ that specifies the arm to be played at time $t$.

The players have a \textit{team objective}: they want to maximize the expected sum of rewards $\bbE[\sum_{t=1}^T \sum_{i=1}^{M} X_{i,a_i(t)}(t)]$ over some time horizon $T$. Let $\sP(N)$ denote the set of possible permutations of the $N$ arms. If $\mu_{i,j}$ were known, the optimal policy is clearly to pick the optimal bipartite matching between arms and players (which may not be unique),
\begin{equation}
\label{eq:optmatching}
\mathbf{k}^{**} \in \arg \max_{\mathbf{k} \in \sP(N) } \sum_{i=1}^{M} \mu_{i,k_{i}}.
\end{equation}

When expected rewards are not known, players must pick learning policies that minimize the \textit{expected regret}, defined for policies $\alpha=(\alpha_i, 1 \leq i \leq M  )$ as,
\begin{equation}
\label{eq:regret}
\sR_{\alpha}(T)=T\sum_i \mu_{i,k_i^{**}}-\bbE_{\alpha}\left[\sum_{t=1}^{T} \sum_{i=1}^{M} X_{i,\alpha_i(t)}(t)\right].
\end{equation}

As in the single player model, we consider a refinement of the regret to factor in computational or communication costs.  Communication costs are justified because known distributed algorithms for bipartite matching \cite{Be92,ZaSpPa08} require a certain amount of information exchange over multiple time slots. This cost will depend on the specific algorithm. Here, however, we will just consider an `abstract'  cost $C$.

Let $C$ units of cost be incurred each time this occurs, and let $m(t)$ be the number of times it happens in time $t$. Then, the expected regret for policy $\alpha$ to be minimized is,
\begin{align}
\label{eq:dregret}  
\sR_{\alpha}(T)= &T\sum_{i=1}^{M} \mu_{i,k_i^{**}}-\bbE_{\alpha}\left[\sum_{t=1}^{T} \sum_{i=1}^{M} X_{i,\alpha_i(t)}(t)\right] \nonumber \\
&+ C \bbE_{\alpha}[m(T)].
\end{align}
where $\mathbf{k}^{**}$ is the optimal matching as defined in~\eqref{eq:optmatching}.

\section{Prior work}
\label{sec:prior}
We now briefly describe the key features and results of existing single and multiplayer bandit policies.
\subsection{Single player policies}
\label{subsec:sp-previous} 

We focus on three different MAB algorithms that capture different classes of policies.


In \cite{AuCeFi02}, Auer et al. proposed an index based policy,  $\tt UCB_1$ which achieves logarithmic regret. It worked by playing the arm with the largest value of sample mean plus a confidence bound. The interval shrank deterministically as the arm got played more often and traded-off exploration and exploitation. It was shown in~\cite{AuCeFi02} that the expected regret incurred by the policy over a horizon $T$ is bounded by,
\begin{equation}
\label{regret-ucb1}
\sR_{UCB_1}(T) \leq 8\log(T) \sum_{j>1}^N \frac{1}{\Delta_j} + (1 + \frac{\pi^2}{3})\sum_{j>1}^N \Delta_j,
\end{equation}
where $\Delta_j := \mu_1 - \mu_j$.

Thompson Sampling (\texttt{TS}) is a probability-matching policy that has been around for quite some time in the literature~\cite{Th33} although it was not well-studied in the context of bandit problems until quite recently~\cite{ChLi12,AgGo12}. Arms are played randomly according to the probability of their being optimal. As an arm gets played more often, its sampling distribution become narrower. Unlike a fully Bayesian method such as Gittins Index~\cite{Gi79}, \texttt{TS} can be implemented efficiently in bandit problems. The regret of the policy was shown in ~\cite{AgGo12} to be bounded by,
\begin{equation}
\label{regret-ts}
\sR_{TS}(T) \leq O\bigg(\Big( \sum_{j>1}^N \frac{1}{\Delta_j^2} \Big)^2 \log(T)\bigg), 
\end{equation}
where the constants have been omitted for brevity. A stronger upper bound for the case of Bernoulli rewards  that matches the asymptotic rate lower bound  in Lai and Robbins \cite{LaRo85}   is given  in \cite{KaKoMu12}. Numerically, \texttt{TS} has been found to empirically outperform $\tt UCB_1$ in most settings \cite{ChLi12, KaKoMu12}.

$\tt UCB_4$ is another confidence-bound based index policy that was proposed recently to overcome some of the shortcomings of the $\tt UCB_1$ policy, namely its reliance on index computation in each time step and the difficulty in extending the algorithm to a multiplayer setting. It works by cleverly choosing a sequence of times to compute the $\tt UCB_1$ index. The expected regret was shown in ~\cite{KaNaJa14} to be bounded by,
\begin{equation}
\label{regret-ucb4}
\sR_{UCB_4}(T) \leq \Delta_{\max} \Big( \sum_{j>1}^N \frac{12\log(T)}{\Delta_j^2} + 2N \Big),
\end{equation}
where $\Delta_{\max} = \max \Delta_{j}$.

It can be shown that $\tt UCB_1$ and \texttt{TS} incur linear regret if computation cost is included in the model. The expected regret of the $\tt UCB_4$ algorithm over a time horizon $T$ with computation cost $C$ is bounded by~\cite{KaNaJa14},
\begin{equation}
\label{regret-ucb4}
\tilde{\sR}_{UCB_4}(T) \leq \Big(\Delta_{\max} + C(1 + \log(T)\Big) \Big( \sum_{j>1}^N \frac{12\log(T)}{\Delta_j^2} + 2N \Big).
\end{equation}
Thus, expected regret is $O(\log^2(T))$.


\subsection{Multiplayer policies}
\label{subsec:mp-previous}

The major issues that are encountered in decentralizing bandit policies are coordination among players and finite precision of indices being communicated. The $\tt dUCB_4$ policy~\cite{KaNaJa14} was the first such policy that did not assume identical channel rewards for different players. The policy is a natural decentralization of $\tt UCB_4$ that uses Bertsekas' auction algorithm~\cite{Be88} for distributed bipartite matching. 

If $\Delta_{\min}$ is known, the expected regret of ${\tt dUCB_4}$ is,
\begin{align*}
 \tilde{\sR}_{\tt dUCB_4}(T) &\leq  (L\Delta_{\max} + C(L)   (1+\log(T)) ) \times \\
 &\left( \frac{4 M^{3} (M+2) N \log(T) }{  (\Delta_{\min} - ( (M+1) \epsilon)^{2} }   + NM (2M+1)  \right),
\end{align*}
where $L$ is the frame length, $\Delta_{\min}$ is the minimum difference between the optimal and the next best permutations, $\Delta_{\max}$ is the maximum difference in rewards between permutations, and $\epsilon$ is the precision input of the distributed bipartite matching algorithm. Also, $C(L)$ indicates that the cost of communication and computation is a function of the frame length $L$. Thus, $\tilde{\sR}_{\tt dUCB_4}(T)= O(\log^{2}(T))$. A slight modification to the policy with increasing frame length addresses the case when $\Delta_{\min}$ is unknown~\cite{KaNaJa14}. 

\section{New (near-)logarithmic bandit policies}
\label{sec:new-policy}

In this section, we present our work in developing two closely related policies for single player bandit problems and their generalizations to multiplayer settings.

\subsection{Single player policies: $\tt E^3$ and $\tt E^3$-$\tt TS$}
\label{subsec:sp-policy}

$\tt E^3$ and $\tt E^3$-$\tt TS$ are phased policies detailed in Algorithms~\ref{algo:E^3} and~\ref{algo:E^3-b} respectively.  Their key difference from the previous policies is that they have deterministic exploration and exploitation phases. In the following, an epoch is defined to comprise of one exploration phase and one exploitation phase.

\textbf{Exploration phase:} During an exploration phase, the player tries out different arms in a round-robin fashion and computes indices for each arm. At the end of the phase, the player chooses the arm with the maximum value of the index. The index computation differs for $\tt E^3$ and $\tt E^3$-$\tt TS$ policies.

\textbf{Exploitation phase:} In this phase, the player plays the arm that was chosen at the end of the previous exploration phase. No index computation happens during the exploitation phase and the player sticks to her decisions during this phase. The length of the exploitation phase doubles each successive epoch. 

\begin{algorithm}
\caption{: Exponentially-spaced Exploration and Exploitation policy ($\tt E^3$)}
\label{algo:E^3}
\begin{algorithmic}[1]
\STATE {\bf Initialization:} Set $t=0$ and $l=1$;
\WHILE {($t \leq T$)}
\STATE {\bf Exploration Phase:}  Play each arm $j, 1 \leq j \leq N$,  $\gamma$ number of times;
\STATE Update the sample mean $\overline{X}_{j}(l)$, $1 \leq j \leq N$;
\STATE Compute the best arm $ j^{*}(l) :=  \arg \max_{1 \leq j \leq N } \overline{X}_{j}(l) $;
\STATE {\bf Exploitation Phase:}  Play arm $j^*(l)$ for $2^{l}$ time slots;
\STATE Update $t \leftarrow t+N \gamma+2^{l}$, $l \leftarrow l+1$;
\ENDWHILE
\end{algorithmic}
\end{algorithm}

\begin{algorithm}
\caption{: Exponentially-spaced Exploration and Exploitation algorithm-TS ($\tt E^3$-$\tt TS$)}
\label{algo:E^3-b}
\begin{algorithmic}[1]
\STATE {\bf Initialization:} Set $l=1$ and $t=0$. For each arm $i=1,2,...,N$, set $S_i = 0$, $F_i = 0$;
\WHILE {($t \leq T$)}
\STATE {\bf Exploration Phase:}  Play each arm $j, 1 \leq j \leq N$,  $\gamma$ number of times;
\STATE For each play of each arm $i$, store reward as $\tilde{r}_i(t)$;
\STATE Perform a Bernoulli trial with success probability $\tilde{r}_i(t)$ and observe output $r_i(t)$;
\STATE If $r_i(t) = 1$, then set $S_{i} = S_{i} + 1$, else $F_{i} = F_{i} + 1$;
\STATE Sample $\theta_i(l)$ from $Beta(S_i + 1, F_i + 1)$ distribution;
\STATE Compute the \textit{best arm} $ j^{*}(l) :=  \arg \max_{1 \leq j \leq N } \theta_{j}(l) $;
\STATE {\bf Exploitation Phase:} Play arm $j^*(l)$ for $2^{l}$ time slots;
\STATE Update $t \leftarrow t+N \gamma+2^{l}$, $l \leftarrow l+1$;
\ENDWHILE
\end{algorithmic}
\end{algorithm}

$\tt E^3$ and $\tt E^3$-$\tt TS$, while largely similar, differ in how they choose the arm to play during the exploitation phase. While $\tt E^3$ uses the simple sample mean value, $\tt E^3$-$\tt TS$ draws from a $\beta$-distribution in a manner similar to the \texttt{TS} policy.

The $\beta$-distribution is chosen in $\tt E^3$-$\tt TS$ due to convenient posterior form after Bernoulli observations.  A $\beta (a, b)$-distribution prior results in a posterior of $\beta (a+1, b)$ or $\beta (a, b+1)$ depending on success or failure of the Bernoulli trial, respectively.

We now give the performance bounds for the policies with an index computation cost $C$ in the main result of this section. Both algorithms will be analyzed concurrently as their proof techniques are largely similar. 

The following concentration inequality will be used in the analysis and is introduced here for the reader's ease.

\noindent{{\bf Fact 1:} Chernoff-Hoeffding inequality} \cite{BoLuMa13}.\\
Let $X_{1}, \ldots , X_{t}$ be a sequence of real-valued random variables, such that, for each $i \in \{1, \ldots, t \}, 0 \leq X_{i} \leq 1$ and $\mathbb{E}[X_{i}|\mathcal{F}_{i-1}] = \mu$, where $\mathcal{F}_{i} = \sigma(X_{1}, \ldots, X_{i})$. Let $S_{t}=\sum_{i=1}^{t} X_{i}$. Then for all $a \geq 0$, 
\begin{align*}
&\bbP\left( \frac{S_{t}}{t}   \geq \mu + a\right) \leq e^{-2a^{2}t},~  \bbP\left(\frac{S_{t}}{t}  \leq \mu - a\right)  \leq e^{-2a^{2}t}.
\end{align*}

We now give the main result of this section.

\begin{theorem}
\label{thm:E^3-regret}(Regret bounds for $\tt E^3$ and $\tt E^3$-$\tt TS$ policies)

Let $\Delta_{\min}$ and $\Delta_{\max}$ denote the differences between the mean rewards of the optimal arm, and the second best and worst arms, respectively. 

\noindent (i)  If $\Delta_{\min}$ is known, set $\gamma = \lceil \frac{2}{\Delta_{\min}^2} \rceil$ and $\gamma_\beta = \lceil \frac{8}{\Delta_{\min}^2} \rceil$. Then, the expected regret of the $\tt E^3$ and $\tt E^3$-$\tt TS$ policies with computation cost $C$ is,
\begin{align}
&\tilde{\sR}_{\tt E^3}(T) \leq N  \Delta_{\max} \gamma  \log(T)  + N C  \log(T)  + 8 N \Delta_{\max} \\
&\tilde{\sR}_{\tt E^3-\tt TS}(T) \leq N \Delta_{\max} \gamma_\beta  \log(T)  + N C  \log(T)  + 16 N \Delta_{\max} 
\end{align} 
\noindent (ii)  If $\Delta_{\min}$ is not known, choose $\gamma=\gamma_{t}$, where $\{\gamma_{t}\}$ is  a positive sequence such that $\gamma_{t} \rightarrow \infty$ as $t \rightarrow \infty$. Then, 
\begin{align}
&\tilde{\sR}(T) \leq N \Delta_{\max} \gamma_{T} \log(T)  +  N C \log(T) + N \Delta_{\max} B 
\end{align} 
where $\tilde{\sR}(T) =  \max \{\tilde{\sR}_{\tt E^3}(T), \tilde{\sR}_{\tt E^3-TS}(T)\}$ and $B$ is a constant independent of $T$. In particular, for $\gamma_{t} = \log^{\delta}t, \delta \in (0,1)$, 
\begin{align*}
&\tilde{\sR}(T) \leq N \Delta_{\max} \log^{(1+\delta)}(T)  +  N C \log(T) + N \Delta_{\max} B(\delta) 
\end{align*} 
where $B(\delta) = 2^{l(\delta)}, l(\delta) = (\Delta^{2}_{\min}/4)^{-1/\delta} $. 
\end{theorem}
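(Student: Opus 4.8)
The plan is to decompose the computation-augmented regret of each policy into three additive contributions and reduce the whole problem to controlling a single weighted sum over epochs. First I would bound the number of epochs run up to horizon $T$. Since epoch $l$ devotes $2^{l}$ slots to exploitation, the exploitation slots alone satisfy $\sum_{l=1}^{L}2^{l}=2^{L+1}-2\le T$, so the number of completed epochs is $L(T)\le\log_2 T$. This immediately yields two of the three terms. Each epoch contributes at most $N\gamma\Delta_{\max}$ regret from its round-robin exploration phase (arm $j$ is played $\gamma$ times, costing $\Delta_j\le\Delta_{\max}$ per play) and triggers at most $N$ index computations, one per arm, at cost $C$ each; summing over the $\le\log T$ epochs produces the $N\Delta_{\max}\gamma\log T$ and $NC\log T$ terms verbatim.

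The core of the argument is the exploitation regret $\Delta_{\max}\sum_{l}2^{l}\,\bbP[j^{*}(l)\neq 1]$. The essential structural observation is that the sample means $\overline{X}_{j}(l)$ must \emph{accumulate} the exploration samples across epochs, so that by the end of epoch $l$ each arm has been sampled $l\gamma$ times; without accumulation $\bbP[j^{*}(l)\neq 1]$ would be a constant and the weights $2^{l}$ would make the sum diverge. For a fixed suboptimal arm $j$, the event $\overline{X}_{j}(l)\ge\overline{X}_{1}(l)$ forces $\overline{X}_{1}(l)\le\mu_{1}-\Delta_{j}/2$ or $\overline{X}_{j}(l)\ge\mu_{j}+\Delta_{j}/2$; applying Fact~1 with $a=\Delta_{j}/2$ on $l\gamma$ samples bounds each by $e^{-\Delta_{j}^{2}l\gamma/2}$. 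The choice $\gamma=\lceil 2/\Delta_{\min}^{2}\rceil$ makes the exponent at least $l$, so after a union bound $\bbP[j^{*}(l)\neq 1]\le 2(N-1)e^{-l}$, and $\sum_{l}2^{l}\cdot 2(N-1)e^{-l}=2(N-1)\sum_{l}(2/e)^{l}$ is a convergent geometric series bounded by $8N$. This delivers the $\tt E^3$ bound.

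For $\tt E^3$-$\tt TS$ the same skeleton applies, but the comparison is between Beta samples $\theta_{j}(l)$ rather than sample means, and this is where I expect the main obstacle. I would split $\bbP[\theta_{j}(l)\ge\theta_{1}(l)]$ along a two-layer deviation budget: the Bernoulli-ized empirical frequency (whose mean is $\mu_{i}$ by construction of the trial with success probability $\tilde r_{i}$) must stay within $\Delta_{j}/4$ of its mean, and the Beta sample must stay within $\Delta_{j}/4$ of that frequency. The first layer is again Chernoff--Hoeffding, but with $a=\Delta_{j}/4$ the exponent loses a factor of $4$, forcing $\gamma_{\beta}=\lceil 8/\Delta_{\min}^{2}\rceil$; the second layer requires a concentration/tail bound for the Beta distribution (relating $\mathrm{Beta}(S+1,F+1)$ to a binomial, as in the Thompson-sampling analyses), and the two failure events double the constant to $16N$. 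Verifying this Beta tail control cleanly while keeping the per-epoch failure probability at order $e^{-l}$ is the step demanding the most care.

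Finally, for part~(ii) with unknown $\Delta_{\min}$ I would use the monotone schedule $\gamma=\gamma_{t}\uparrow\infty$ and split the epoch sum at the threshold $l(\delta)$ at which $\gamma_{t}$ first exceeds the value needed to push the exponent above $l$. Epochs $l\le l(\delta)$ are bounded crudely by their total length $\sum_{l\le l(\delta)}2^{l}\le 2^{l(\delta)+1}$, which is independent of $T$ and furnishes the constant $N\Delta_{\max}B$; for $\gamma_{t}=\log^{\delta}t$, the relation $t_{l}\approx 2^{l}$ gives $\gamma_{t_{l}}\approx l^{\delta}$, so the threshold $l^{\delta}\gtrsim 4\Delta_{\min}^{-2}$ solves to $l(\delta)=(\Delta_{\min}^{2}/4)^{-1/\delta}$ and $B(\delta)=2^{l(\delta)}$. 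For $l>l(\delta)$ the accumulated sample count grows super-linearly, $e^{-\Theta(\text{count})}$ dominates $2^{l}$, and the tail is a bounded constant absorbed into $B$. The exploration term then reads $N\Delta_{\max}\gamma_{T}\log T=N\Delta_{\max}\log^{1+\delta}T$, completing the bound.
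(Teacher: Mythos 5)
Your proposal is correct and follows essentially the same route as the paper's proof: the same three-term decomposition (exploration, computation, exploitation), the same epoch count $l_0 \le \log T$, the same Chernoff--Hoeffding plus union-bound argument on the accumulated $l\gamma$ samples giving $\bbP(\overline{X}_1(l) < \overline{X}_j(l)) \le 2e^{-l}$ and a convergent $(2/e)^l$ series, the same two-layer $\Delta_j/4$ split with the Beta--Binomial cdf relation for $\tt E^3$-$\tt TS$ (the paper's Lemma~3, which is exactly the step you flag as delicate), and the same threshold-splitting argument with $l(\delta)=(\Delta_{\min}^2/4)^{-1/\delta}$ for part~(ii). No gaps beyond the paper's own level of detail.
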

Proof is given in Appendix~\ref{app:proof-1}.

\begin{remark}
(i) For the sake of clarity, we will assume that $\gamma_t$ changes at the beginning of every exploration phase. (ii) Part 1 of the above theorem assumes the knowledge   $\Delta_{\min}$ in order to define $\gamma$. In fact we only need to know a lower bound on $\Delta_{\min}$. If $\Delta_{LB} \leq \Delta_{\min}$, we can fix $\gamma = \lceil \frac{2}{\Delta_{LB}^2} \rceil$. It is straightforward to show that, with a slight modification of the proof, the theorem still holds. Obviously, a tighter lower bound on  $\Delta_{\min}$ results in a tighter bound on the regret. 
\end{remark}

Although the bounds of $\tt E^3$ and $\tt E^3$-$\tt TS$ are poorer than $\tt UCB_1$ and $\tt TS$, they lend themselves to easy decentralization and can be extended to multiplayer bandit problems with minimal effort. Performances of all single player algorithms are compared in Section~\ref{subsec:sp-simulations}.

\subsection{Multiplayer policies: $\tt dE^3$ and $\tt dE^3$-$\tt TS$}
\label{subsec:mp-policy}

In this section, we present multiplayer generalizations of the  $\tt E^3$ and $\tt E^3$-$\tt TS$ policies that were described in the previous section. They are detailed in Algorithms~\ref{algo:dE^3} and~\ref{algo:dE^3-b}, respectively. They are also divided into exploration and exploitation phases. 

\textbf{Exploration phase:} During exploration phases, players take turns to explore arms in a round-robin fashion. 
At the end of an exploration phase, the players update their index values (either $g_{i,j}$, or $\theta_{i,j}$). Then they participate in a distributed bipartite matching to determine the players to channels assignments. This requires some additional time slots and comes at a cost, and contributes to regret.  This communication and the distributed bipartite matching process is compressed into line 5 in Algorithm~\ref{algo:dE^3} and line 8 in Algorithm~\ref{algo:dE^3-b} as a call to $\tt dBM$.  



\textbf{Distributed bipartite matching (dBM):} Let $g(t)$ $(g_{i,j}(t), 1 \leq i \leq M, 1 \leq j \leq N)$ denote a vector of indices. In both algorithms, $\tt dBM_{\e}$ $(g(t))$ refers to an $\epsilon$-optimal distributed bipartite matching algorithm, such as Bertsekas' auction algorithm~\cite{Be88}, that yields a matching $k^*(t) =(k_1^*(t),\ldots, k_M^*(t)) \in \sP(N) $ such that $\sum_{i=1}^{M} g_{i, k^{*}_{i}(t)}(t) \geq  \sum_{i=1}^{M}   g_{i, k_{i}}(t) ) - \epsilon, ~ \forall \mathbf{k} \in \sP(N)  , \mathbf{k} \neq \mathbf{k}^*$. The details of  $\tt dBM$ implementation is described in Section \ref{sec:dbm}

\textbf{Exploitation phase:} In this phase, players stick to the allocation given to them at the end of the distributed bipartite matching process. No index computation is carried out in this phase. The length of the exploitation phase doubles in each successive epoch. 
\begin{algorithm}[h]
\caption{: $\tt dE^3 $}
\label{algo:dE^3}
\begin{algorithmic}[1]
\STATE {\bf Initialization:} Set $t=N \gamma$ and $l=1$.
\WHILE {($t \leq T$)}
\STATE {\bf Exploration Phase:}  Each player $i, 1 \leq i \leq M$, plays each arm $j, 1 \leq j \leq N$, $\gamma$ number of times;
\STATE Update the index $g_{i,j}(l) = \overline{X}_{i,j}(l)$;
\STATE Participate in the ${\tt dBM_{\e}}(g(l))$  algorithm to obtain a match $k^{*}(l)$;
\STATE {\bf Exploitation Phase:}  Each player $i$ plays arm $k_i^*(l)$ for $2^{l}$ time slots;
\STATE $t \leftarrow t+ MN \gamma+2^{l}$, $l \leftarrow l+1$;
\ENDWHILE
\end{algorithmic}
\end{algorithm}

\begin{algorithm}[h]
\caption{: $\tt dE^3$-$\tt TS$}
\label{algo:dE^3-b}
\begin{algorithmic}[1]
\STATE {\bf Initialization:} Set $t=N \gamma$ and $l=1$. For each arm $j=1,2,...,N$ and player $i=1,...,M$, set $S_{i,j} = 0$, $F_{i,j} = 0$;
\WHILE {($t \leq T$)}
\STATE {\bf Exploration Phase:}  Each player $i, 1 \leq i \leq M$, plays each arm $j, 1 \leq j \leq N$, $\gamma$ number of times;
\STATE For each play of each arm $j$, store reward as $\tilde{r}_{i,j}(t)$;
\STATE Perform a Bernoulli trial with success probability $\tilde{r}_{i,j}(t)$ and observe output $r_{ij}(t)$;
\STATE If $r_{ij}(t) = 1$, set $S_{i,j} = S_{i,j} + 1$, else $F_{i,j} = F_{i,j} + 1$.
\STATE Sample $\theta_{i,j}$ from $Beta(S_{i,j} + 1, F_{i,j} + 1)$ distribution.
\STATE Participate in the ${\tt dBM_{\e}}(\theta)$ algorithm to obtain a match $k^{*}(l)$;
\STATE {\bf Exploitation Phase:} Each player $i$ plays arm $k_i^*(l)$ for $2^{l}$ time slots;
\STATE $t=t+ MN \gamma+2^{l}$, $l=l+1$;
\ENDWHILE
\end{algorithmic}
\end{algorithm}

\subsection{Regret analysis}
In both these algorithms, the total regret can be thought as the sum of three different regret terms. The time slots spent in exploration are considered to contribute to regret as the first term, $\tilde{\sR}^{O}(T)$. At the end of every exploration phase, a bipartite matching algorithm is run and each run adds cost $C$ to the second term of regret $\tilde{\sR}^{C}(T)$.  The cost $C$ depends on two parameters: (a) the precision of the bipartite matching algorithm $\epsilon_{1}>0$, and (b) the precision of the index representation $\epsilon_{2}>0$. A bipartite matching algorithm has an $\epsilon_{1}$-precision if it gives an $\epsilon_{1}$-optimal matching. This would happen, for example, when such an algorithm is run only for a finite number of rounds. The index has an $\epsilon_{2}$-precision if any two indices are not distinguishable if they are closer than $\epsilon_{2}$. This can happen, for instance, when indices must be communicated to other players with a finite number of bits. Thus, the cost $C$ is a function of $\epsilon_{1}$ and $\epsilon_{2}$, and can be denoted as $C(\epsilon_{1}, \epsilon_{2})$, with $C(\epsilon_{1}, \epsilon_{2}) \rightarrow \infty$ as $\epsilon_{1}$ or  $\epsilon_{2} \rightarrow 0$. Since, $\epsilon_{1}$ and $\epsilon_{2}$ are the parameters that are fixed \textit{a priori}, we consider $\epsilon=\min( \epsilon_{1},\epsilon_{2})$ to specify  both precisions. We shall denote this computation and communication cost by $C(\epsilon)$ as different communication methods and implementations of distributed bipartitie matching will give different costs.

The third term in the regret expression, $\tilde{\sR}^{I}(T)$, comes from non-optimal matchings in the exploitation phase, i.e.,  if the matching $\mathbf{k}^{*}(l)$ is not the optimal matching $\mathbf{k}^{**}$. Thus, we have the total expected regret of the $\tt dE^3$ and $\tt dE^3$-$\tt TS$ policies to be given by,
\begin{eqnarray}
\label{eq:dregret-as-sum-of3}
\tilde{\sR}(T) = \tilde{\sR}^{O}(T) + \tilde{\sR}^{I}(T) + \tilde{\sR}^{C}(T).
\end{eqnarray}
We now give the main results of this section. 
\begin{theorem}
\label{thm:dE^3-reg} 
(i) Let $\epsilon > 0$ be the precision of the bipartite matching algorithm and the precision of the index representation. If $\Delta_{\min}$ is known, choose $\epsilon$ such that $0 < \epsilon < \D_{\min}/(M+1)$, set $\gamma = \left \lceil 2 M^{2}/(\Delta_{\min} - (M+1)\epsilon)^{2} \right \rceil$ and $\gamma_\beta = \left \lceil 8 M^{2}/(\Delta_{\min} - (M+1)\epsilon)^{2} \right \rceil$. Then, the expected regrets of the ${\tt dE^3}$ and $\tt dE^3$-$\tt TS$ policies are,
\begin{align}
&\tilde{\sR}_{\tt dE^3}(T) \leq  M N \Delta_{\max} \gamma \log(T)  + M N  ~ C (\epsilon) \log(T)  \nonumber  \\
&\hspace{2cm} + 8 M N \Delta_{\max}, \\
&\tilde{\sR}_{\tt E^3-\tt TS}(T) \leq  M N \Delta_{\max} \gamma_\beta \log(T)  + M N ~ C (\epsilon) \log(T)  \nonumber \\
&\hspace{2cm}+ 16 M N \Delta_{\max}.
\end{align}  
Note that, in the above expressions, $\epsilon$ is a chosen constant. Thus, $\tilde{\sR}(T) = O(\log(T))$ for both policies. 

\noindent (ii)  If $\Delta_{\min}$ is not known, choose $\gamma=\gamma_{t}$, where $\{\gamma_{t}\}$ is  a positive sequence such that $\gamma_{t} \rightarrow \infty$ as $t \rightarrow \infty$. Also choose $\epsilon=\epsilon_{t}$,   where $\{\epsilon_{t}\}$ is  a positive sequence  such that $\epsilon_{t} \rightarrow 0$ as $t \rightarrow \infty$. Then,  
\begin{align}
\tilde{\sR}_{\tt d}(T) &\leq M N \Delta_{\max} \gamma_{T} \log(T)  + M N  C (\epsilon_{T}) \log(T)   \nonumber  \\
&\hspace{2cm} + M N B
\end{align}   
where $\tilde{\sR}_{\tt d}(T) =  \max \{\tilde{\sR}_{\tt dE^3}(T), \tilde{\sR}_{\tt dE^3-TS}(T)\}$ and $B$ is a constant independent of $T$. In particular, for $C(\epsilon) =  \epsilon^{-1}$, choose $\gamma_{t} = \log^{\delta}t, \epsilon_{t} = \log^{-\delta}t, \delta \in (0,1)$,  and we get 
\begin{align}
&\tilde{\sR}_{\tt d}(T) \leq M N \Delta_{\max} \log^{(1+\delta)}(T)  + M N \log^{(1+\delta)}(T) \nonumber  \\
&\hspace{2cm}+ M N B(\delta)
\end{align} 
where $B(\delta) = b_{0}2^{l(\delta)}, l(\delta) = (\Delta^{2}_{\min}/4)^{-1/\delta} $ and $b_{0}$ is a constant independent of $\delta$. 

\end{theorem}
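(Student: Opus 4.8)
The plan is to start from the decomposition $\tilde{\sR}(T) = \tilde{\sR}^{O}(T) + \tilde{\sR}^{I}(T) + \tilde{\sR}^{C}(T)$ in \eqref{eq:dregret-as-sum-of3} and bound the three pieces separately, with only the exploitation term $\tilde{\sR}^{I}(T)$ requiring real work. First I would pin down the number of epochs $L(T)$ run up to horizon $T$: since the exploitation phase of epoch $l$ has length $2^{l}$, we have $\sum_{l=1}^{L(T)} 2^{l} = 2^{L(T)+1}-2 \le T$, hence $L(T) \le \log_2 T$. This is the single structural fact that turns everything ``per-epoch'' into ``$O(\log T)$''. Each epoch contains exactly one round-robin exploration phase of $MN\gamma$ slots and exactly one call to ${\tt dBM}_{\epsilon}$. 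A single exploration slot loses at most $\Delta_{\max}$ and each matching call costs $C(\epsilon)$, so
\begin{align*}
\tilde{\sR}^{O}(T) \le MN\gamma\,\Delta_{\max}\,L(T) \le MN\gamma\,\Delta_{\max}\log(T), \qquad \tilde{\sR}^{C}(T) \le MN\,C(\epsilon)\,L(T) \le MN\,C(\epsilon)\log(T),
\end{align*}
which already furnishes the first two terms of the claimed bound.

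The heart of the argument is $\tilde{\sR}^{I}(T)$, the loss incurred when the matching $\mathbf{k}^{*}(l)$ chosen for the exploitation phase of epoch $l$ differs from the optimal $\mathbf{k}^{**}$. Since the per-epoch loss is $2^{l}$ times the matching gap, I would write
\begin{align*}
\tilde{\sR}^{I}(T) \le \Delta_{\max}\sum_{l=1}^{L(T)} 2^{l}\,\bbP\!\big(\mathbf{k}^{*}(l)\neq \mathbf{k}^{**}\big),
\end{align*}
reducing everything to showing that the misidentification probability decays geometrically faster than $2^{l}$. The key deterministic lemma I would establish is: if every index satisfies $|g_{i,j}(l)-\mu_{i,j}| \le a$ with $a := (\Delta_{\min}-(M+1)\epsilon)/(2M)$, then ${\tt dBM}_{\epsilon}$ is forced to return $\mathbf{k}^{**}$. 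The reasoning is that a per-matching index error of at most $Ma$ shrinks the true-mean separation of $\mathbf{k}^{**}$ over any suboptimal matching from $\Delta_{\min}$ down to at least $\Delta_{\min}-2Ma$, and this residual separation must dominate the combined slack contributed by the finite-precision index representation and the $\epsilon$-suboptimality of the matching routine. Making these constants line up \emph{exactly} as $(M+1)\epsilon$ — correctly apportioning the budget between the matching precision and the index-representation precision so that the threshold coincides with the $M^{2}$ factor in $\gamma$ — is the step I expect to be the main obstacle, since it is pure bookkeeping that is easy to get off by a factor of $M$.

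Given the lemma, $\{\mathbf{k}^{*}(l)\neq\mathbf{k}^{**}\} \subseteq \bigcup_{i,j}\{|g_{i,j}(l)-\mu_{i,j}|>a\}$. For ${\tt dE^3}$ the index $g_{i,j}(l)=\overline{X}_{i,j}(l)$ is the sample mean over all $l\gamma$ pulls \emph{accumulated} through epoch $l$ (this accumulation is essential — a per-epoch reset would give a constant error probability and a divergent sum), so Fact~1 gives $\bbP(|g_{i,j}(l)-\mu_{i,j}|>a)\le 2e^{-2a^{2}l\gamma}$. The choice $\gamma = \lceil 2M^{2}/(\Delta_{\min}-(M+1)\epsilon)^{2}\rceil$ is precisely what makes $2a^{2}\gamma \ge 1$, so a union bound over the $MN$ indices yields $\bbP(\mathbf{k}^{*}(l)\neq\mathbf{k}^{**}) \le 2MN\,e^{-l}$ and hence
\begin{align*}
\tilde{\sR}^{I}(T) \le 2MN\Delta_{\max}\sum_{l\ge 1}(2/e)^{l} \le 8MN\Delta_{\max},
\end{align*}
which produces the final constant. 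For ${\tt dE^3}$-${\tt TS}$ the same scheme applies, but the index now passes through two layers — a Bernoulli trial converting the stored reward into a $0/1$ outcome, then a $\mathrm{Beta}(S_{i,j}{+}1,F_{i,j}{+}1)$ draw — so I would split the deviation budget $a$ into two halves, controlling the Beta sample around its empirical mean and the empirical Bernoulli mean around $\mu_{i,j}$ separately. Halving the effective deviation is exactly what forces $\gamma_\beta = 4\gamma = \lceil 8M^{2}/(\Delta_{\min}-(M+1)\epsilon)^{2}\rceil$ and replaces the constant $8$ by $16$.

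Finally, for part (ii) with $\Delta_{\min}$ unknown, I would rerun the same three bounds with the epoch-dependent $\gamma_{t_l}$ and $\epsilon_{t_l}$. The exploration and cost terms follow by monotonicity ($\gamma_{t_l}\le\gamma_T$ and $C(\epsilon_{t_l})\le C(\epsilon_T)$), giving $MN\Delta_{\max}\gamma_T\log(T)$ and $MN\,C(\epsilon_T)\log(T)$; substituting $\gamma_t=\log^{\delta}t$, $\epsilon_t=\log^{-\delta}t$ and $C(\epsilon)=\epsilon^{-1}$ yields the two $\log^{1+\delta}(T)$ terms. For $\tilde{\sR}^{I}(T)$ the exponent $2a_l^{2}\gamma_{t_l}$ only exceeds $1$ once $\gamma_{t_l}$ is large enough and $\epsilon_{t_l}$ small enough, so I would split the sum at the threshold epoch $l(\delta)=(\Delta_{\min}^{2}/4)^{-1/\delta}$: the finitely many early epochs are bounded crudely, contributing the constant $B(\delta)=b_{0}2^{l(\delta)}$, while the tail reduces to the same convergent geometric series as in part (i). Summing the three contributions gives the stated $O(\log^{1+\delta}(T))$ bound.
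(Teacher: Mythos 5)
Your proposal is correct and follows essentially the same route as the paper's proof: the same three-term decomposition of \eqref{eq:dregret-as-sum-of3}, the epoch count $l_0 \leq \log T$, the Chernoff--Hoeffding plus union bound over the $MN$ indices with $\gamma$ chosen so the exponent beats $2^l$, and the quartered deviation budget giving $\gamma_\beta = 4\gamma$ and the constant $16$ for the TS variant. Your ``deterministic lemma'' (all indices within $a = (\Delta_{\min}-(M+1)\epsilon)/2M$ forces the optimal matching) is just the contrapositive of the paper's event inclusion into the events $A_{i,j}$ of \eqref{eq:events-dE^3}, and your part (ii) splitting at the threshold epoch matches the argument the paper carries out for Theorem~\ref{thm:E^3-regret} and invokes by reference here.
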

Proof is given in Appendix~\ref{app:proof-2}.

\subsection{Distributed Bipartite Matching}
\label{sec:dbm} 

Both  $\tt dE^3$ algorithm and  $\tt dE^3$-$\tt TS$ algorithm use the distributed bipartite matching algorithm as a subroutine. In Section \ref{subsec:mp-policy} we have given an abstract description of this distributed bipartite matching algorithm.  We now present one such algorithm, namely, Bertsekas' auction algorithm \cite{Be92}, and its distributed implementation. We note that the presented algorithm is not the only one that can be used. Both  $\tt dE^3$ algorithm and  $\tt dE^3$-$\tt TS$ algorithm  will work with a distributed implementation of any bipartite matching algorithm, e.g. algorithms given in \cite{ZaSpPa08}.  

Consider a bipartite graph with $M$ players on one side, and $N$ arms on the other, and $M\leq N$. Each player $i$ has a value $\mu_{i,j}$ for each arm $j$. Each player knows only his own values. Let us denote by $k^{**}$, a matching that maximizes the matching surplus $\sum_{i,j} \mu_{i,j}x_{i,j}$, where the variable $x_{i,j}$ is 1 if $i$ is matched with $j$, and 0 otherwise. Note that $\sum_i x_{i,j} \leq 1, \forall j$, and $\sum_j x_{i,j} \leq 1, \forall i$. Our goal is to find an $\epsilon$-optimal matching. We call any matching $k^*$ to be $\epsilon$-optimal if $\sum_{i} \mu_{i,k^{**}(i)} - \sum_{i} \mu_{i,k^*(i)} \leq \epsilon$.

\begin{algorithm}{}
\caption{: ${\tt dBM_{\e}}$ ( Bertsekas Auction Algorithm)}
\label{algo:dBM}
{\small \begin{algorithmic}[1]
\STATE All players $i$ initialize prices $p_j = 0, \forall ~\text{channels} ~j$;
\WHILE{(prices change)}
\STATE Player $i$ communicates his preferred arm $j_i^*$ and bid $b_i = \max_j (\mu_{ij}-p_j) - \text{second.max}_j(\mu_{ij}-p_j) + \frac{\epsilon}{M}$ to all other players.
\STATE Each player determines on his own if he is the \textit{winner} $i_j^*$ on arm $j$;
\STATE All players set prices $p_j = \mu_{i_j^*,j}$;
\ENDWHILE
\end{algorithmic}
}
\end{algorithm}

Here, $\text{second.max}_j$ is the second highest maximum over all $j$. The \textit{best} arm for a player $i$ is arm $j_i^* = \arg \max_j (\mu_{i,j}-p_j)$. The \textit{winner} $i_j^*$ on an arm $j$ is the one with the highest bid.  

The following lemma in \cite{Be92} establishes that Bertsekas' auction algorithm will find the $\e$-optimal matching in a finite number of steps. 

\begin{lemma}\cite{Be92}
Given $\epsilon>0$, Algorithm~\ref{algo:dBM} with rewards $\mu_{i,j}$, for player $i$ playing the $j$th arm, converges to a matching $k^*$ such that $\sum_{i} \mu_{i, k^{**}(i)} - \sum_{i} \mu_{i, k^*(i)} \leq \epsilon$ where $k^{**}$ is an optimal matching. Furthermore, this convergence occurs in less than $(M^2\max_{i,j}\{\mu_{i,j}\})/\epsilon$ iterations.
\end{lemma}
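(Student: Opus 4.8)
The plan is to prove both assertions—$\epsilon$-optimality and the iteration bound—through the standard auction-algorithm machinery built on $\epsilon$-complementary slackness ($\epsilon$-CS). Call a pair consisting of a (partial) matching $k$ and a nonnegative price vector $p=(p_j)$ \emph{$\epsilon$-CS} if every matched person is assigned an object that is within $\epsilon$ of maximizing her net value, i.e. $\mu_{i,k(i)}-p_{k(i)} \ge \max_{j}(\mu_{i,j}-p_j)-\epsilon$ for all matched $i$. The whole argument rests on two facts: (a) Algorithm~\ref{algo:dBM} preserves $(\epsilon/M)$-CS at every iteration and terminates with a complete matching of all $M$ players, and (b) any complete matching satisfying $(\epsilon/M)$-CS is within $M\cdot(\epsilon/M)=\epsilon$ of the optimum. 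The bidding increment $\epsilon/M$ in Algorithm~\ref{algo:dBM} is chosen precisely so that the per-player slack $\epsilon/M$ accumulates to the target tolerance $\epsilon$ across the $M$ players.

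For the invariant in (a), I would argue by induction on the bids. Initially $p\equiv 0$ and $k=\emptyset$, which is vacuously $(\epsilon/M)$-CS. When player $i$ bids on her preferred object $j_i^*=\arg\max_j(\mu_{i,j}-p_j)$, the bid raises $p_{j_i^*}$ by the gap between her best and second-best net values plus $\epsilon/M$; a direct computation shows that immediately after the update her assigned net value equals the second-best net value minus $\epsilon/M$, so she is within $\epsilon/M$ of optimal and $(\epsilon/M)$-CS holds for her. Players retaining their objects see their own object's price unchanged while all other prices only increase, so their slack cannot worsen; hence the invariant is global. Termination with every player matched follows because an unassigned player keeps bidding, and a standard argument (using $N\ge M$) shows a full matching is eventually reached.

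For (b), I would invoke weak LP duality for the assignment problem. Writing $\pi_i:=\max_j(\mu_{i,j}-p_j)$, the pair $(\pi,p)$ is dual-feasible, so for the optimal matching $k^{**}$ we have $\sum_i\mu_{i,k^{**}(i)}\le\sum_i\pi_i+\sum_i p_{k^{**}(i)}$. On the other hand, $(\epsilon/M)$-CS gives $\mu_{i,k^*(i)}\ge\pi_i+p_{k^*(i)}-\epsilon/M$, and summing over the $M$ players yields $\sum_i\mu_{i,k^*(i)}\ge\sum_i\pi_i+\sum_i p_{k^*(i)}-\epsilon$. Subtracting, the $\sum_i\pi_i$ terms cancel and it remains to control $\sum_i p_{k^{**}(i)}-\sum_i p_{k^*(i)}$. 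The bookkeeping point is that objects never bid upon keep price $0$; since $k^*$ assigns all objects carrying positive price, $\sum_i p_{k^*(i)}$ equals the total price $\sum_j p_j$, while $\sum_i p_{k^{**}(i)}\le\sum_j p_j$ because prices are nonnegative and $k^{**}$ uses only a subset of objects. Thus the price difference is nonpositive and $\sum_i\mu_{i,k^{**}(i)}-\sum_i\mu_{i,k^*(i)}\le\epsilon$, the first assertion.

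For the iteration count I would track how far prices can climb. Each bid on an object raises its price by at least $\epsilon/M$, and once an object's price exceeds $\max_{i,j}\{\mu_{i,j}\}$ it can never again be anyone's preferred object (a free object at price $0$ always dominates it), so it receives no further bids; hence each object is bid upon fewer than $M\max_{i,j}\{\mu_{i,j}\}/\epsilon$ times. Summing over the at most $M$ objects that are ever matched gives fewer than $M^2\max_{i,j}\{\mu_{i,j}\}/\epsilon$ bidding iterations. The main obstacle I anticipate is item (a): carefully verifying that the slack of already-matched players is never destroyed when prices rise, and that the process cannot stall with an unmatched player, since everything downstream—both the $\epsilon$ tolerance in (b) and the price-growth counting—hinges on the $(\epsilon/M)$-CS invariant being maintained exactly by the increment $\epsilon/M$.
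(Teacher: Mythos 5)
Your proof is correct in its essentials, but note what it is being compared against: the paper does not prove this lemma at all --- the statement is imported from Bertsekas' tutorial \cite{Be92}, with the citation standing in for the proof. What you have written is, in effect, a reconstruction of the argument in that reference: the $\epsilon$-complementary-slackness invariant with per-player slack $\epsilon/M$, the weak-duality step showing that a complete $(\epsilon/M)$-CS matching is within $M\cdot(\epsilon/M)=\epsilon$ of the optimum, and the price-growth count giving the $M^{2}\max_{i,j}\{\mu_{i,j}\}/\epsilon$ iteration bound; so your route coincides with the source's, not with anything in the paper itself. Three remarks to tighten it. First, your iteration count leans on the claim that an overpriced arm is never preferred because ``a free arm at price $0$ always dominates it''; to make this airtight, observe that while some player is unassigned, the number of arms ever bid on equals the number of assigned players, hence is strictly less than $M\le N$, so a never-bid (price-$0$) arm exists at every such step --- this is precisely where $N\ge M$ enters, and it deserves to be explicit since for $N=M$ the price-$0$ arm disappears exactly when the matching completes and bidding stops. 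Second, your counting actually yields at most $M\max_{i,j}\{\mu_{i,j}\}/\epsilon+1$ bids per arm, i.e.\ $M^{2}\max_{i,j}\{\mu_{i,j}\}/\epsilon+M$ in total; the lemma's strict bound drops the additive $M$, a looseness in the statement as cited rather than a defect of your argument. Third, line 5 of the paper's Algorithm~\ref{algo:dBM} sets $p_{j}=\mu_{i_{j}^{*},j}$, which taken literally is not Bertsekas' update and would destroy the CS invariant (the winner's net value would collapse to $0$ irrespective of the bid); your analysis implicitly uses the correct update $p_{j}\leftarrow p_{j}+b_{i_{j}^{*}}$, so you have proved the lemma for the algorithm of \cite{Be92} --- which is what the lemma, as a cited result, actually asserts --- while flagging in passing that the paper's pseudocode transcribes it imprecisely.
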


 Our only assumption here is going to be that each user can observe a channel, and determine if there was a successful transmission on it, a collision, or no transmission, in a given time slot. This consists of $J$ rounds. In each round, users transmit in a round robin fashion, where she can signal her channel preferences  using $\lceil{\log M} \rceil $ bits and bid values (difference of top two indices) using $\lceil{\log 1/\e_1}\rceil$ bits.  The number of rounds $J$ is  chosen so that the ${\tt dBM}$ algorithm (based on Algorithm \ref{algo:dBM}) returns an $\e_2$-optimal matching.  More details on this implementation is given in \cite{KaNaJa14}.

\section{Simulations}
\label{sec:simulation}

We conducted extensive simulations comparing the performances of the proposed policies with prior work. The results are presented in the respective sections below.

\subsection{Single player bandit policies}
\label{subsec:sp-simulations}

For the single player setting, we considered a four-armed bandit problem with rewards for arms drawn independently from Bernoulli distributions with means $0.1,~0.5,~0.6,~0.9$. The scenario was simulated over a fixed time horizon $T=2,000,000$ timeslots and the performance of the proposed single-player policies was evaluated. The performance of each policy was averaged over 10 sample runs and the results presented here. Different true means and distributions were also considered and they gave similar rankings for the algorithms. In the interest of space, those scenarios are not presented.

In Figure~\ref{fig:simulationsE3UCB4}, the single player policies proposed in this paper, $\tt E^3$ and $\tt E^3$-$\tt TS$, are compared with the benchmark $\tt UCB_1$ policy. $\Delta_{\min}$ is assumed to be known $(0.1)$ and, consequently, $\gamma$ is fixed. The bound for $\tt E^3$-$\tt TS$ is also shown with the dashed line. It can be observed that, although, all three policies have logarithmic order of regret performance in time, the new $\tt E^3$ and $\tt E^3$-$\tt TS$ policies perform slightly worse than the $\tt UCB_1$ policy. This is attributable to the deterministic exploration phase length which must take into account the worst-case scenario. However, as we shall see in the next section, this gives us a significant performance advantage in the multiplayer setting.

\begin{figure}[tbh]
\hspace{-0.3in}
\centering
\includegraphics[width=9cm, height=5cm]{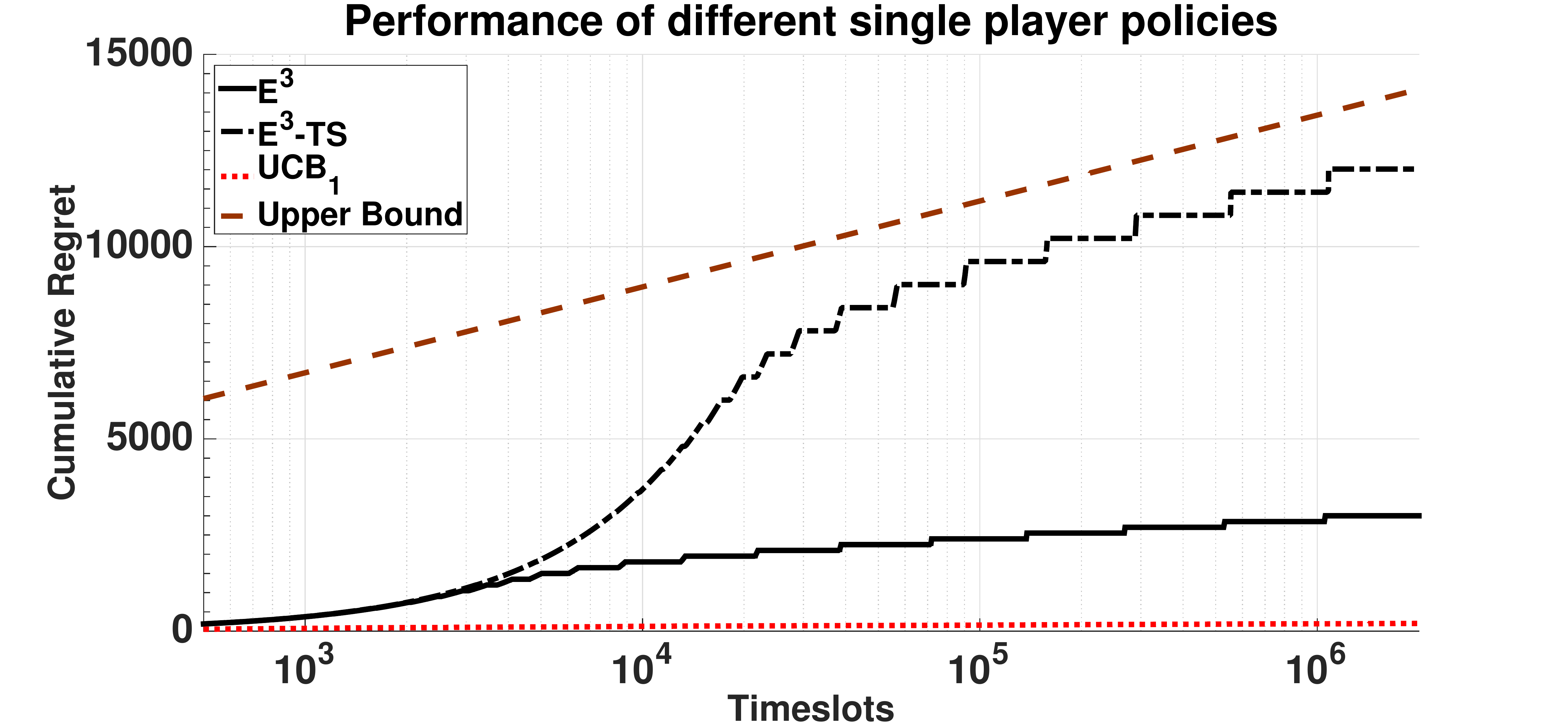} 
\caption{Figure showing growth of cumulative regret of the $\tt E^3$, $\tt E^3$-$\tt TS$ and $\tt UCB_1$ algorithms for a four-armed single-player bandit problem with true means $[0.1,  0.5, 0.6, 0.9]$ (no computation cost), with time plotted on log scale.} 
\label{fig:simulationsE3UCB4}
\end{figure}

Note that in Figure~\ref{fig:simulationsE3UCB4}, computation cost is assumed to be zero. If computation cost were included, $\tt E^3$ and $\tt E^3$-$\tt TS$ would retain their logarithmic regret performance. However, the cumulative regret of $\tt UCB_1$ would grow linearly, just as with $\tt TS$~\cite{KaNaJa14}.

\subsection{Multiplayer bandit policies}
\label{subsec:mp-simulation}

We now present the empirical performance of the proposed $\tt dE^3$ and $\tt dE^3$-$\tt TS$ policies. We consider a three-player, three-armed bandit setting. Rewards for each arm are generated independently from a Bernoulli distribution with means $0.2,~0.25,~0.3$ for player 1, $0.4,~0.6,~0.5$ for player 2 and $0.7,~0.9,~0.8$ for player 3. A time horizon spanning 20 epochs  was considered. $\epsilon=0.001$ was used as the tolerance for the bipartite matching algorithm, which was done using $\tt dBM_\epsilon$, a distributed implementation of Bertsekas' auction algorithm. The performance of each policy was averaged over 10 sample runs. $\gamma$ was set equal to $100$ for $\tt dE^3$ and $400$ for $\tt dE^3$-$\tt TS$ (see analysis for the reason for differing $\gamma$'s). A fixed per unit cost each time the distributed bipartite matching algorithm $\tt dBM$ is run, is included in the setting to model communication cost in the decentralized setting.

The plot of the growth of cumulative regret with time of $\tt dE^3$, $\tt dE^3$-$\tt TS$ and $\tt dUCB_4$ is shown in Figure~\ref{fig:simulationsE3E3BDUCB4}. We can see that the logarithmic regret performance of $\tt dE^3$ and $\tt dE^3$-$\tt TS$ clearly outperforms the $\log^2 T$-regret performance of our earlier $\tt dUCB_4$ policy~\cite{KaNaJa14}. The dashed line curve is the theoretical upper bound on the performance of $\tt dE^3$-$\tt TS$.

\begin{figure}[tbh]
\hspace{-0.3in}
\includegraphics[width=10cm, height=5cm]{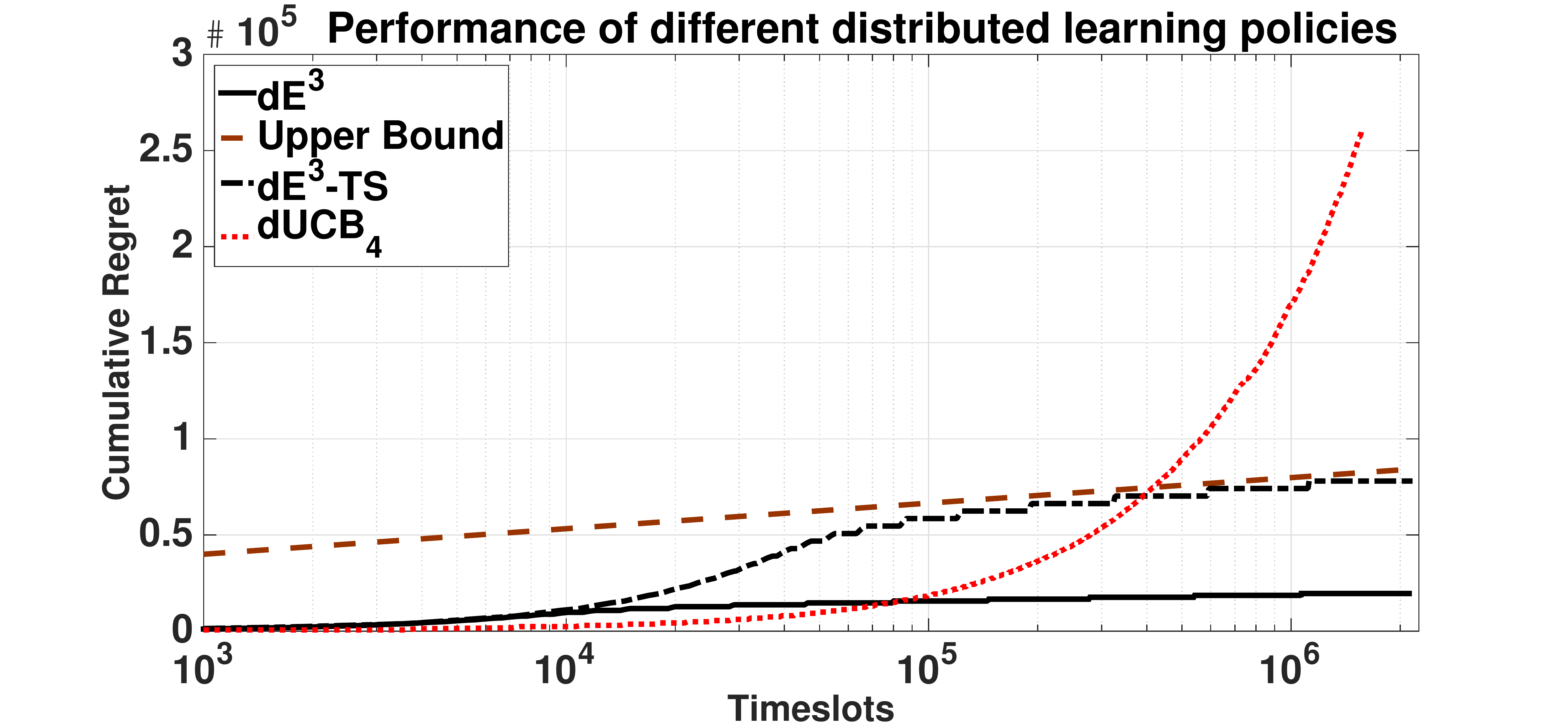}
\caption{Figure showing growth of cumulative regret of the $\tt dE^3$ and $\tt dE^3$-$\tt TS$ algorithms for a three-player, three-armed bandit setting with true means $[0.2 0.25 0.3; 0.4 0.6 0.5; 0.7 0.9 0.8]$ (communication cost included), with time plotted on log scale.} 
\label{fig:simulationsE3E3BDUCB4}
\end{figure}

\section{Conclusion}
\label{sec:conclusion}

We designed two closely related single player and multiplayer bandit policies that achieve logarithmic or near-logarithmic regret performance depending on the assumptions of the model. Both policies have deterministic exploration and exploitation phases, which make them well-suited to decentralization for use in the multiplayer setting.

Performances of these policies were compared to prior work in the literature. They were shown to outperform previous policies for multiplayer bandits, but not for the single player model due to the deterministic phases of these new policies. While we have approached logarithmic regret performance under certain assumptions in the multiplayer model, the question of whether a policy under truly general conditions can achieve fully logarithmic regret remains open.

\bibliographystyle{IEEEtran}
\bibliography{References_TS-MAB}

\appendices

\section{Proof of Theorem 1}
\label{app:proof-1} 

In the following proof, the subscript $\beta$ for $\gamma_\beta$ will be omitted where the context refers to both $\tt E^3$ and $\tt E^3$-$\tt TS$ together. Also, $\sR$ will be used to denote regret for either policy when both are within context.
\vspace{-0.5cm}
\subsection{$\Delta_{\min}$ is known:}
\label{app:proof-1-SubsecA}

We denote the expected regrets incurred in the exploration phases with $\tilde{\sR}^{O}(T)$, exploitation phases with $\tilde{\sR}^{I}(T)$ and due to computation with $\tilde{\sR}^{C}(T)$. Then,
\begin{eqnarray}
\label{eq:regret-as-sum-of3}
\tilde{\sR}(T) = \tilde{\sR}^{O}(T) +  \tilde{\sR}^{C}(T) + \tilde{\sR}^{I}(T), 
\end{eqnarray}
for both $\tt E^3$ and $\tt E^3$-$\tt TS$ policies.

Let $T$ be in the $l_{0}$-th exploitation epoch. By construction, $T \geq N \gamma l_{0} + 2^{l_{0}} - 2$. Thus, $\log T \geq l_{0}$ and,

\begin{equation}
\label{eq:reg-O-E^3}
\tilde{\sR}_{\tt}^{O}(T) = \gamma l_{0} \sum^{N}_{j=2} \Delta_{j} \leq  N \gamma l_{0} \Delta_{\max} \leq  N \gamma \log(T) \Delta_{\max},
\end{equation}
where $\Delta_j = \mu_{1} - \mu_{j}$. Also, using the definition of computation cost,
\begin{equation}
\label{eq:reg-C-E^3}
\tilde{\sR}^{C}(T) =  N C l_{0} \leq   N C \log(T). 
\end{equation}

Now, $\tilde{\sR}^{I}(T) = \mathbb{E} \left[ \sum^{N}_{j=2} \Delta_{j} \tilde{n}_{j}(T) \right]$ where $\tilde{n}_{j}(T)$ is the number of times arm $j$ has been played during the exploitation phases. For $\tt E^3$,
\begin{align}
\tilde{n}_j(T) &=  \sum^{l_{0}}_{l=1} 2^{l} \bb1 \{\overline{X}_{j}(l) > \max_{1\leq i \leq N} \overline{X}_i(l) \} \nonumber \\
&\leq \sum^{l_{0}}_{l=1} 2^{l} \bb1 \{ \overline{X}_{j}(l) > \overline{X}_{1}(l) \}
\end{align}
Similarly, for $\tt E^3$-$\tt TS$, $\tilde{n}_{j}(T) \leq \sum^{l_{0}}_{l=1} 2^{l} \bb1 \{\theta_{j}(l) > \theta_{1}(l) \}$. Thus,
\begin{align}
\label{eq:reg-I-E^3-s1 }
 \tilde{\sR}_{\tt E^3}^{I}(T) & \leq \Delta_{\max} \sum_{l=1}^{l_{0}} \sum^{N}_{j=2}  2^{l} \bbP( \overline{X}_{1}(l) < \overline{X}_{j}(l)),
\end{align}
\begin{align}
\label{eq:reg-I-E^3-b-s1 }
\text{and,}~  \tilde{\sR}_{\tt E^3-\tt TS}^{I}(T) & \leq \Delta_{\max} \sum_{l=1}^{l_{0}} \sum^{N}_{j=2}  2^{l} \bbP( \theta_{1}(l) < \theta_{j}(l)).
\end{align}

The following two lemmas bound the event probabilities above for the $\tt E^3$ and $\tt E^3$-$\tt TS$ policies.

\begin{lemma}
\label{lem:E^3-prob}
For $\tt E^3$, with $\gamma = \lceil \frac{2}{\Delta_{\min}^2} \rceil$,
\begin{equation}
\bbP (\overline{X}_1(l) < \overline{X}_j(l)) \leq 2e^{-l},~\forall j>1.
\end{equation}

\end{lemma}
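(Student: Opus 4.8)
The plan is to bound the probability that the suboptimal arm $j$ has a larger empirical mean than the optimal arm $1$ at the end of exploration phase $l$ by reducing it to two one-sided deviation events and invoking the Chernoff-Hoeffding inequality (Fact~1). The crucial structural observation is that, by construction of the $\tt E^3$ policy, each arm is pulled exactly $\gamma$ times in every exploration phase, so after $l$ phases the sample mean $\overline{X}_j(l)$ is the average of $\gamma l$ i.i.d.\ rewards from arm $j$. This \emph{growing} sample size is what will drive the geometric decay in $l$, and keeping careful track of it is the whole point of the calibration of $\gamma$.

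First I would split the event using the midpoint $\mu_j + \Delta_j/2 = \mu_1 - \Delta_j/2$ between the two means, where $\Delta_j = \mu_1 - \mu_j$. If $\overline{X}_1(l) < \overline{X}_j(l)$, then at least one empirical mean must have crossed this midpoint: either $\overline{X}_1(l) \leq \mu_1 - \Delta_j/2$ or $\overline{X}_j(l) \geq \mu_j + \Delta_j/2$ (otherwise $\overline{X}_1(l) > \mu_1 - \Delta_j/2 = \mu_j + \Delta_j/2 > \overline{X}_j(l)$, a contradiction). A union bound then yields
\begin{align*}
\bbP(\overline{X}_1(l) < \overline{X}_j(l)) &\leq \bbP\Big(\overline{X}_1(l) \leq \mu_1 - \tfrac{\Delta_j}{2}\Big) \\
&\quad + \bbP\Big(\overline{X}_j(l) \geq \mu_j + \tfrac{\Delta_j}{2}\Big).
\end{align*}

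Next I would apply Fact~1 to each term with $t = \gamma l$ and $a = \Delta_j/2$: the rewards lie in $[0,1]$, and the round-robin exploration supplies $\gamma l$ i.i.d.\ samples of each arm by the end of phase $l$. Each term is at most $e^{-2(\Delta_j/2)^2 \gamma l} = e^{-\Delta_j^2 \gamma l / 2}$, so the sum is bounded by $2\,e^{-\Delta_j^2 \gamma l / 2}$. Finally, using $\Delta_j \geq \Delta_{\min}$ for every $j > 1$ together with the choice $\gamma = \lceil 2/\Delta_{\min}^2 \rceil \geq 2/\Delta_{\min}^2$, the exponent obeys $\Delta_j^2 \gamma / 2 \geq \Delta_{\min}^2 (2/\Delta_{\min}^2)/2 = 1$, which gives the claimed bound $\bbP(\overline{X}_1(l) < \overline{X}_j(l)) \leq 2\,e^{-l}$.

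The argument is essentially routine, and the only step requiring genuine care is confirming that the $\gamma l$ rewards accumulated over the first $l$ exploration phases are truly i.i.d.\ (so that Fact~1 applies with $t = \gamma l$) and that the effective sample size scales linearly in $l$ rather than staying fixed at $\gamma$. That linear scaling is exactly what turns the one-shot calibration $\gamma \approx 2/\Delta_{\min}^2$ into the geometric factor $e^{-l}$, which is later needed to dominate the doubling exploitation lengths $2^l$ when summing the series for $\tilde{\sR}^{I}(T)$.
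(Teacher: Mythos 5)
Your proposal is correct and follows essentially the same route as the paper's proof: the same decomposition of $\{\overline{X}_1(l) < \overline{X}_j(l)\}$ into the two one-sided deviation events around the midpoint $\mu_j + \Delta_j/2 = \mu_1 - \Delta_j/2$, the same application of the Chernoff--Hoeffding bound with sample size $\gamma l$ and deviation $\Delta_j/2$, and the same use of $\Delta_j \geq \Delta_{\min}$ with $\gamma \geq 2/\Delta_{\min}^2$ to obtain the exponent $l$. Your explicit verification that the cumulative sample size after $l$ exploration phases is $\gamma l$ is a point the paper leaves implicit, but it is the identical argument.
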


\begin{proof}

The event $\{\overline{X}_{1}(l) < \overline{X}_{j}(l) \}$ implies at least one of the following events:
\begin{align*}
A_{j}:=&\{ \overline{X}_{j}(l)  - \mu_{j} > \Delta_{j}/2\}, B_{j}:=&\{ \overline{X}_{1}(l)  - \mu_{1} < - \Delta_{j}/2\}.
\end{align*}
Using the Chernoff-Hoeffding bound and choosing $\gamma = \lceil \frac{2}{\Delta_{min}^{2}} \rceil$, we get,
\begin{flalign}
\label{eq:eventprob-E^3}
\bbP(A_{j})  \leq e^{-2 l \gamma \Delta^{2}_{j}/4} \leq e^{-l}, ~\text{and similarly,}~ \mathbb{P}(B_{j}) \leq e^{-l}.
\end{flalign} 
By the union bound, we get $\bbP( \overline{X}_{1}(l) < \overline{X}_{j}(l)) \leq 2 e^{-l}$.

\end{proof}

\begin{lemma}
\label{lem:E^3-b-prob}
For $\tt E^3$-$\tt TS$, with $\gamma_\beta = \lceil \frac{8}{\Delta_{\min}^2} \rceil$,
\begin{equation}
\bbP (\theta_1(l) < \theta_j(l)) \leq 4e^{-l},~\forall j>1. 
\end{equation}

\end{lemma}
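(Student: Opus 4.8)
The plan is to reuse the midpoint decomposition of Lemma~\ref{lem:E^3-prob}, but to insert a second layer that separates the randomness of the \emph{data} from the randomness of the \emph{posterior draw} $\theta_j(l)$. Write $\hat\mu_j(l) := S_j/(l\gamma_\beta)$ for the empirical success frequency of arm $j$ after the $l$-th exploration phase. Because each stored $r_i(t)$ is a Bernoulli$(\tilde r_i(t))$ sample with $\bbE[\tilde r_i(t)] = \mu_j$, the count $S_j$ is a sum of $l\gamma_\beta$ i.i.d.\ Bernoulli$(\mu_j)$ variables, so $\hat\mu_j(l)$ concentrates at $\mu_j$ by Fact~1; and $\theta_j(l) \sim Beta(S_j+1, F_j+1)$ with $S_j + F_j = l\gamma_\beta$.

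First I would set the midpoint $y_j := \mu_j + \Delta_j/2 = \mu_1 - \Delta_j/2$ and note that $\{\theta_1(l) < \theta_j(l)\}$ forces $\{\theta_j(l) > y_j\} \cup \{\theta_1(l) < y_j\}$, so
\[
\bbP(\theta_1(l) < \theta_j(l)) \le \bbP(\theta_j(l) > y_j) + \bbP(\theta_1(l) < y_j).
\]
The two terms are mirror images (upper vs.\ lower tail of the respective posteriors), so it is enough to bound each by $2e^{-l}$.

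The core step is $\bbP(\theta_j(l) > y_j)$, which I would split once more at a quarter-gap around the empirical mean:
\[
\bbP(\theta_j(l) > y_j) \le \bbP\!\Big(\hat\mu_j(l) > \mu_j + \tfrac{\Delta_j}{4}\Big) + \bbP\!\Big(\theta_j(l) > y_j,\ \hat\mu_j(l) \le \mu_j + \tfrac{\Delta_j}{4}\Big).
\]
The first term is pure data concentration: Fact~1 gives $\bbP(\hat\mu_j(l) > \mu_j + \Delta_j/4) \le e^{-2 l\gamma_\beta (\Delta_j/4)^2} = e^{-l\gamma_\beta\Delta_j^2/8} \le e^{-l}$, since $\gamma_\beta \ge 8/\Delta_{\min}^2 \ge 8/\Delta_j^2$. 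For the second term I would invoke the Beta--Binomial identity $\bbP(\theta_j(l) > y_j) = \bbP(\text{Bin}(l\gamma_\beta+1, y_j) \le S_j)$ to convert the posterior-tail event into a binomial lower-tail event; on the conditioning set $S_j \le l\gamma_\beta(\mu_j + \Delta_j/4)$, whereas $\text{Bin}(l\gamma_\beta+1, y_j)$ has mean $\approx l\gamma_\beta(\mu_j + \Delta_j/2)$, so the threshold lies at least $\approx l\gamma_\beta\Delta_j/4$ below the binomial mean, and a second application of Fact~1 (now to a sum of $l\gamma_\beta+1$ Bernoulli$(y_j)$ variables) yields another $e^{-l}$. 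Adding the two pieces gives $\bbP(\theta_j(l) > y_j) \le 2e^{-l}$; the symmetric argument (lower tail of $\theta_1$, upper binomial tail) gives $\bbP(\theta_1(l) < y_j) \le 2e^{-l}$, and summing proves $\bbP(\theta_1(l) < \theta_j(l)) \le 4e^{-l}$.

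The main obstacle is the mixed event $\{\theta_j(l) > y_j,\ \hat\mu_j(l) \le \mu_j + \Delta_j/4\}$: bounding the chance that the posterior draw overshoots the true gap even when the observed data did not. The Beta--Binomial identity is what makes this tractable, turning an incomplete-beta integral into a binomial tail that Fact~1 controls. This two-stage split is also exactly what forces the two bookkeeping changes relative to Lemma~\ref{lem:E^3-prob}: the gap is cut into quarters instead of halves, which is why $\gamma_\beta = \lceil 8/\Delta_{\min}^2\rceil$ is four times the $\gamma$ there, and there are now four tail events rather than two, which is why the constant is $4e^{-l}$ rather than $2e^{-l}$. The only care point is the $+1$'s in $Beta(S_j+1,F_j+1)$ and in $\text{Bin}(l\gamma_\beta+1,\cdot)$, which shift means by $O(1)$ and perturb the exponent by a factor $1+o(1)$; since $\gamma_\beta \ge 8$ these are absorbed uniformly in $l \ge 1$ without affecting the stated bound.
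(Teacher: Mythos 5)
Your proposal follows essentially the same route as the paper's proof: the same two-tail midpoint decomposition, the same quarter-gap conditioning on the empirical mean $\{s_j(l)/m_j(l) < \mu_j + \Delta_j/4\}$, the same Beta--Binomial identity to turn the posterior tail into a binomial tail, and two applications of Chernoff--Hoeffding with $\gamma_\beta = \lceil 8/\Delta_{\min}^2\rceil$, yielding the identical $2e^{-l}$ per arm and $4e^{-l}$ total. The only (immaterial) differences are that the paper disposes of the $+1$ in $\mathrm{Bin}(m_j(l)+1,\cdot)$ via the monotonicity inequality $F^B_{m+1,p}(s) \le F^B_{m,p}(s)$ rather than your slack-absorption argument, and that it assumes Bernoulli rewards w.l.o.g.\ where you justify this reduction explicitly via the two-stage sampling.
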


\begin{proof}

Without loss of generality, we will assume the underlying reward distributions of the arms to have a Bernoulli distribution to simplify the analysis. This eliminates the need for line 5 in the $\tt E^3$-$\tt TS$ policy illustrated in Algorithm~\ref{algo:E^3-b}. However, this assumption can be relaxed without any change to the results.

As in Lemma~\ref{lem:E^3-prob}, the event $\{\theta_{1}(l) < \theta_{j}(l) \}$ implies at least one of the events:
\begin{equation}
\label{eq:events-E^3-b}
A_{j}:=\{ \theta_{j}(l)  - \mu_{j} > \Delta_{j}/2\}, B_{j}:=\{ \theta_{1}(l)  - \mu_{1} < - \Delta_{j}/2\}.
\end{equation}

Let $m_j(l)$ denote the number of plays of arm $j$ during the exploration phases after the $l$-th exploration epoch, and let $s_j(l)$ be the number of successes ($r = 1$) in these plays. Then, $\theta_j(l)$ is sampled from a $\beta(s_j(l) + 1, m_j(l) - s_j(l) + 1)$ distribution.

Additionally, let $A(l)$ denote the event $\{\frac{s_j(l)}{m_j(l)} < \mu_j + \frac{\Delta_j}{4}\}$. Then,

\begin{equation}
\label{eq:E^3-b-term}
\bbP (\theta_j(l) \geq \mu_j + \frac{\Delta_j}{2}) \leq \bbP (\overline{A}(l)) + \bbP (\theta_j(l) \geq \mu_j + \frac{\Delta_j}{2},A(l)).
\end{equation}

The first term in the expression,
\begin{align*}
\bbP (\overline{A}(l)) &= \bbP \Big(\frac{s_j(l)}{m_j(l)} \geq \mu_j + \frac{\Delta_j}{4}\Big) \leq \exp\Big( \frac{-2\gamma_{\beta} l\Delta_j^2}{16} \Big),
\end{align*}
where the last inequality comes from the Chernoff-Hoeffding inequality and by noting that $\frac{s_j(l)}{m_j(l)}$ is a random variable with mean $\mu_j$. Also, $m_j(l) = \gamma_\beta l$.

The second term,
\begin{align}
\label{eq:E^3-b-term-1}
&\bbP (\theta_j(l) \geq \mu_j + \frac{\Delta_j}{2},A(l)) \nonumber\\
&= \bbP \Big(\theta_j(l) \geq \mu_j + \frac{\Delta}{2},\frac{s_j(l)}{m_j(l)} < \mu_j + \frac{\Delta_j}{4} \Big) \nonumber \\
&\leq \bbP \Big( \theta_j(l) > \frac{s_j(l)}{m_j(l)} + \frac{\Delta_j}{4} \Big) \nonumber \\
&= \bbP \Big( \beta(s_j(l) + 1, m_j(l) - s_j(l) + 1) > \frac{s_j(l)}{m_j(l)} + \frac{\Delta_j}{4} \Big) \nonumber \\
&= \bbE \Big[ F^B_{m_j(l) + 1, \frac{s_j(l)}{m_j(l)} + \frac{\Delta_j}{4}}(s_j(l)) \Big] \nonumber \\
&\leq \bbE \Big[ F^B_{m_j(l), \frac{s_j(l)}{m_j(l)} + \frac{\Delta_j}{4}}(s_j(l)) \Big].
\end{align}
Here, $F^B_{n,p}(x)$ is the cdf of the $\tt binomial(n,p)$ distribution. The equality in the second-to-last line comes from the fact that $F^{\beta}_{a,b}(x) = 1 - F^B_{a + b - 1, x}(a - 1)$, where $F^{\beta}_{a,b}(x)$ is the cdf of the $\tt \beta(a,b)$ distribution~\cite{AgGo12}. The inequality on the last line is a standard inequality for binomial distributions.

But, by the Chernoff-Hoeffding inequality, it can be seen that $F^B_{n,p}(np - n\delta) \leq \exp (-2n\delta^2)$. Thus,

\begin{equation}
\label{eq:E^3-b-term-2}
\bbP (\theta_j(l) \geq \mu_j + \frac{\Delta}{2},A(l)) \leq \exp \Big( \frac{-2\gamma_\beta l \Delta_j^2}{16} \Big)
\end{equation}

Setting $\gamma_\beta := \lceil \frac{8}{\Delta_{\min}^2} \rceil$ in~\eqref{eq:E^3-b-term-1} and~\eqref{eq:E^3-b-term-2}, we get,

\begin{equation}
\bbP (\theta_j(l) \geq \mu_j + \frac{\Delta_j}{2}) \leq 2e^{-l}.
\end{equation}

Similarly, $\bbP (\theta_1(l) \leq \mu_1 - \frac{\Delta_j}{2}) \leq 2e^{-l}$, and the claim of the lemma follows from the union bound.

\end{proof}

Continuing with the proof of Theorem 1, thus, 
\begin{align}
\label{eq:reg-I-E^3-s2}
\tilde{\sR}_{\tt E^3}^{I}(T) & \leq  \Delta_{\max} \sum_{l=1}^{l_{0}} \sum^{N}_{j=2}  2^{l}  2 e^{-l} \leq 2 N \Delta_{\max} \sum_{l=0}^{\infty}  \left( 2/e\right)^{l} \nonumber \\
& \leq 2 N \Delta_{\max} / (1-(2/e)) < 8 N  \Delta_{\max}, 
\end{align}
\begin{align}
\label{eq:reg-I-E^3-b-s2}
\tilde{\sR}_{\tt E^3-\tt TS}^{I}(T) & \leq  \Delta_{\max} \sum_{l=1}^{l_{0}} \sum^{N}_{j=2}  2^{l}  4 e^{-l} \leq 4 N \Delta_{\max} \sum_{l=0}^{\infty}  \left( 2/e\right)^{l} \nonumber \\
& \leq 4 N \Delta_{\max} / (1-(2/e)) < 16 N  \Delta_{\max}, 
\end{align}

Now, combining all the terms, we get
\begin{equation}
\label{eq:reg-total-E^3}
\tilde{\sR}_{\tt E^3}(T) \leq N \gamma \log T \Delta_{\max} + C \log(T) + 8 N \Delta_{\max}.
\end{equation} 
\begin{equation}
\label{eq:reg-total-E^3-b}
\tilde{\sR}_{\tt E^3-\tt TS}(T) \leq N \gamma_\beta \log T \Delta_{\max} + C \log(T) + 16 N \Delta_{\max}.
\end{equation}

\subsection{$\Delta_{\min}$ is unknown:} 
\label{app:proof-1-SubsecA}

Suppose $t_{l}$ be the time $t$ at which $l$th exploration phase begins. For the clarity of explanation, we assume that $\gamma$ changes only in the beginning of an exploration phase. So, in the $l$th exploration phase, each arms is played $\gamma_{t_{l}}$ times in a round robin manner.  

As in the proof given in the previous subsection, let $T$ be in the $l_{0}$th exploitation epoch. By construction, $T \geq N \sum^{l_{0}}_{l=1} \gamma_{t_{l}} + 2^{l_{0}} - 2$. Thus, $\log T \geq l_{0}$ and,
\begin{equation}
\label{eq:reg-O-E^3-B}
\tilde{\sR}_{\tt}^{O}(T) = \sum^{l_{0}}_{l=1} \gamma_{t_{l}} \sum^{N}_{j=2} \Delta_{j} \leq  N \Delta_{\max}  \gamma_{T} l_{0} \leq  N \Delta_{\max} \gamma_{T} \log(T) .
\end{equation}
The second inequality is from the fact that $\gamma_{t}$ is a monotone increasing sequence. 

The computation cost is same as before, i.e., 
\[ \tilde{\sR}^{C}(T) =  N C l_{0} \leq   N C \log(T).  \] 
\begin{align*}
\text{Using \eqref{eq:eventprob-E^3},}~ \tilde{\sR}_{\tt E^3}^{I}(T) & \leq \Delta_{\max} \sum_{l=1}^{l_{0}} \sum^{N}_{j=2}  2^{l} \bbP( \overline{X}_{1}(l) < \overline{X}_{j}(l)) \\
 & \leq N \Delta_{\max} \sum_{l=1}^{\infty}  2^{l} e^{-b_{1} \sum^{l}_{k=1} \gamma_{t_{k}} } 
\end{align*}
where $b_{1} = \Delta^{2}_{\min}/2$. Since $ \gamma_{t} \rightarrow \infty$  monotonically (and $\gamma_{t_{k}} \geq 1$), there exists an $l^{\prime}$ such that $b_{1} \sum^{l}_{k=1} \gamma_{t_{k}} \geq l,  \forall l > l^{\prime}$. Then, 
\begin{align*}
\tilde{\sR}_{\tt E^3}^{I}(T) & \leq N \Delta_{\max} \left(  \sum_{l=1}^{l'-1}  2^{l} e^{-b_{1} \sum^{l}_{k=1} \gamma_{t_{k}} }  + \sum_{l=l'}^{\infty} (2/e)^{l}  \right) \\
&\leq N \Delta_{\max} B
\end{align*}
where $B$ is a finite constant, independent of $T$. 

When $\gamma_{t_{l}} =  \log^{\delta}t_{l}$ for $\delta \in (0,1)$, it is easy to see that $\gamma_{t_{l}} \geq l^{\delta}, \forall l$. Then, $ \sum^{l}_{k=1} \gamma_{t_{k}} \geq  \sum^{l}_{k=1} k^{\delta} \geq \int^{l+1}_{x=1} (x-1)^{\delta} dx \geq 0.5 l^{(1+\delta)}$.  From this, $l' = (2/b_{1})^{1/\delta}$. Then, we can get $B = B(\delta) = 2^{l'}$.

\section{Proof of Theorem 2}
\label{app:proof-2}

We first show that if $\Delta_{\min}$ is known, we can choose an $\epsilon < \Delta_{\min}/(M+1)$, such that ${\tt dE^3}$ and $\tt dE^3$-$\tt TS$ algorithms achieve a logarithmic regret growth with $T$. If $\Delta_{\min}$ is not known, we can pick a positive monotone sequence $\{\epsilon_{t}\}$  such that $\e_t \to 0$, as $t \to \infty$. In a decentralized bipartite matching algorithm, the precision $\e$ will depend on the amount of information exchanged.

The proof will be illustrated here only for the $\tt dE^3$ policy since the differences between it and the analysis of the $\tt dE^3$-$\tt TS$ policy are similar to those found in Theorem~\ref{thm:E^3-regret}.

Let us denote the optimal bipartite matching with $\mathbf{k}^{**} \in \sP(N)$  such that $\mathbf{k}^{**} \in \arg \max_{\mathbf{k} \in \sP(N) } \sum_{i=1}^{M} \mu_{i,\mathbf{k}_{i}}$. Denote $\mu^{**} := \sum_{i=1}^{M} \mu_{i, \mathbf{k}_i^{**}}$, and define $\Delta_{\mathbf{k}} :=  \mu^{**} - \sum_{i=1}^{M} \mu_{i,\mathbf{k}_{i} },~ \mathbf{k} \in \sP(N)$.

Let $\Delta_{\min}  = \min_{\mathbf{k} \in \sP(N), \mathbf{k} \neq \mathbf{k}^{**}} \Delta_{\mathbf{k}}$ and \\$\Delta_{\max}  = \max_{\mathbf{k} \in \sP(N)} \Delta_{\mathbf{k}}$.  We assume  $\Delta_{\min}>0$.

\subsection{$\Delta_{\min}$ is known:}  

Let $T$ be in the $l_{0}$th exploitation epoch. It follows that, $T \geq M N \gamma l_{0} + 2^{l_{0}} - 2$ and, hence, $\log T \geq l_{0}$. Then, 
\begin{equation}
\label{eq:reg-O-dE^3}
\tilde{\sR}_{\tt dE^3}^{O}(T) = M N \gamma l_{0} \Delta_{\max} \leq  M N \gamma \log(T) \Delta_{\max}.
\end{equation}
\text{Also, by definition,}
\begin{equation}
\label{eq:reg-C-dE^3}
\tilde{\sR}_{\tt dE^3}^{C}(T) =  M N C(\epsilon) l_{0} \leq M N  C(\epsilon) \log(T). 
\end{equation}

A suboptimal matching occurs in the $l$-th exploitation epoch if the event $\{\sum^{M}_{i=1}  \overline{X}_{i,k^{**}_{i}}(l)   < (M+1) \epsilon + \sum^{M}_{i=1} \overline{X}_{i,k^{*}_{i}}(l)\}$ occurs. If each index has an error of at most $\epsilon$, the sum of M terms
may introduce an error of at most $M \epsilon$. In addition, the distributed bipartite matching algorithm ${\tt dBM_{\epsilon} }$ itself yields only an $\epsilon$-optimal matching. This accounts for the term $(M+1) \epsilon$ above.

Clearly, as in the single player case, 
\begin{align}
\tilde{\sR}_{\tt dE^3}^{I}(T) &\leq \Delta_{\max} \sum^{l_{0}}_{l=1} 2^{l} \mathbb{P}\Big( \sum^{M}_{i=1} \overline{X}_{i,k^{**}_{i}}(l) < (M+1) \epsilon \nonumber \\
& + \sum^{M}_{i=1} \overline{X}_{i,k^{*}_{i}}(l) \Big).
\end{align}
The event $\left \{ \sum^{M}_{i=1} \overline{X}_{i,k^{**}_{i}}(l)   < (M+1) \epsilon + \sum^{M}_{i=1} \overline{X}_{i,k^{*}_{i}}(l) \right \}$ implies at least one of the following events
\begin{equation}
\label{eq:events-dE^3}
A_{i,j}:=\{ |\overline{X}_{i,j}(l)  - \mu_{i,j}| > (\Delta_{\min}-(M+1)\epsilon)/2M\},
\end{equation}
for $1 \leq i \leq M, 1 \leq j \leq N$. By the Chernoff-Hoeffding bound, and then using the fact that $\gamma = \lceil  2 M^{2}/(\Delta_{\min}-(M+1)\epsilon)^{2} \rceil$, 
\begin{equation}
\label{eq:eventprob-dE^3}
\bbP(A_{i,j})  \leq 2 e^{-2 l \gamma (\Delta_{\min}-(M+1)\epsilon)^{2}/4M^{2}} \leq 2 e^{-l}.
\end{equation}
Then, by using the union bound, 
\begin{align}
\tilde{\sR}_{\tt E^3}^{I}(T)  & \leq \Delta_{\max} \sum^{l_{0}}_{l=1} 2^{l} \sum^{M}_{i=1} \sum^{N}_{j=1}  \mathbb{P}(A_{i,j}) \nonumber \\
& \leq \Delta_{\max} 2 M N   \sum_{l=0}^{\infty  }  \left( 2/e\right)^{l} = 2 \Delta_{\max} M N/ (1-(2/e)) \nonumber \\
& < 8 M N \Delta_{\max}.
\end{align}
Combining all the terms, we get
\begin{align}
\label{eq:reg-total-dE^3}
\tilde{\sR}_{\tt dE^3}(T) \leq & M N \gamma \log(T) \Delta_{\max} + M N C (\epsilon) \log(T) \nonumber \\
& + 8 M N \Delta_{\max}.
\end{align} 

In a similar manner,
\begin{align}
\label{eq:reg-total-dE^3-b}
\tilde{\sR}_{\tt E^3-\tt TS}(T) \leq & M N \gamma_\beta \log(T) \Delta_{\max} + M N C (\epsilon) \log(T) \nonumber \\
& + 16 M N \Delta_{\max},
\end{align} 
where $\gamma_\beta = \lceil  8 M^{2}/(\Delta_{\min}-(M+1)\epsilon)^{2} \rceil$.

\subsection{$\Delta_{\min}$ is unknown:} 

The proof is similar to the proof of the analogous case of Theorem \ref{thm:E^3-regret}, and is omitted.

\begin{IEEEbiography}[{\includegraphics[width=1in,height
=1.25in,clip,keepaspectratio]{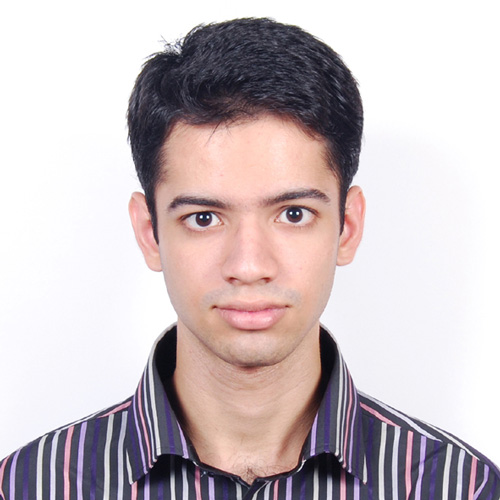}}]{Naumaan Nayyar} received a PhD in Electrical Engineering from the University of Southern California in 2015. He also holds an MS in Statistics from USC and a B.Tech. in EE from the Indian Institute of Technology, Bombay. He is currently in a healthcare startup after working for a while at IBM Research. His research interests are in the areas of learning, online optimization, statistical inference and stochastic control theory.
\end{IEEEbiography}

\begin{IEEEbiography}[{\includegraphics[width=1in,height
=1.25in,clip,keepaspectratio]{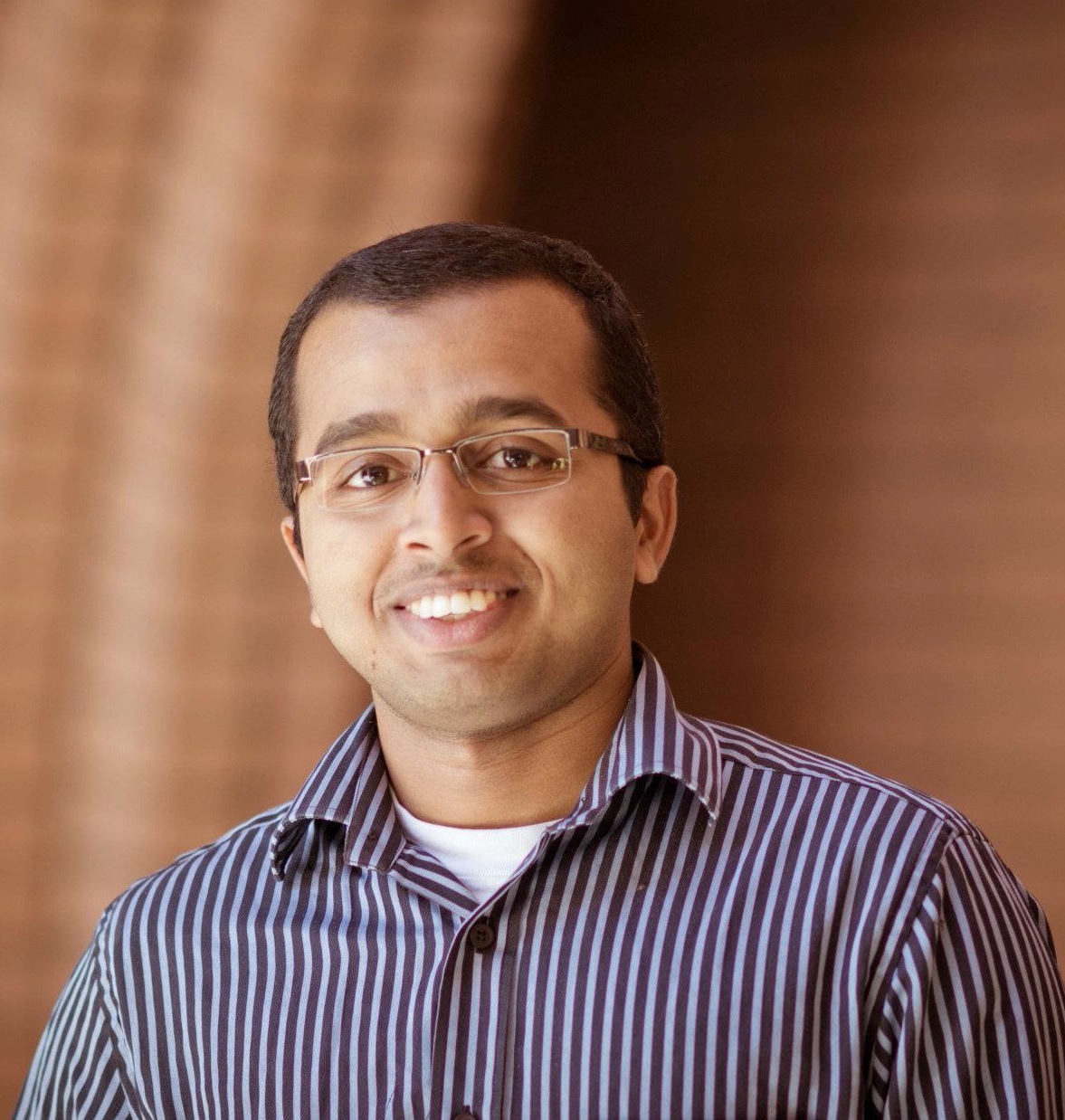}}]{Dileep  Kalathil}
is a postdoctoral scholar in the Department of Electrical Engineering and Computer Sciences at the University of California, Berkeley. He received his PhD from University of Southern California (USC) in 2014 where he won the best PhD Dissertation Prize in the USC Department of Electrical Engineering. He received an M.Tech from IIT Madras where he won the award for the best academic performance in the EE department. His research interests include sustainable energy systems, data driven optimization, online learning, stochastic control, and game theory.
\end{IEEEbiography}

\begin{IEEEbiography}[{\includegraphics[width=1in,height
=1.25in,clip,keepaspectratio]{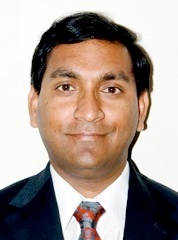}}]{Rahul Jain}
is an associate professor and the K. C. Dahlberg Early Career Chair in the EE department at the University of Southern California. He received his PhD in EECS and an MA in Statistics from the University of California, Berkeley, his B.Tech from IIT Kanpur. He is winner of numerous awards including the NSF CAREER award, an IBM Faculty award and the ONR Young Investigator award. His research interests span wireless communications, network economics and game theory, queueing theory, power systems and stochastic control theory.
\end{IEEEbiography}

\end{document}